\titlespacing{\section}{0pt}{*0}{*0}
\titlespacing{\subsection}{0pt}{*0}{*0}
\titlespacing{\subsubsection}{0pt}{*0}{*0}
\DeclareMathOperator*{\argmin}{arg\,min}
\newtheorem{definition}{Definition}
\newtheorem{theorem}{Theorem}
\newtheorem{assumption}{Assumption}
\title{Learning Sparse Distributions using Iterative Hard Thresholding}
\author{%
	Jacky Y. Zhang \\
	Department of Computer Science\\
	University of Illinois at Urbana-Champaign\\
	\texttt{yiboz@illinois.edu}\\
	\And
	Rajiv Khanna \\
	Department of Statistics\\
	University of California at Berkeley\\
	\texttt{rajivak@berkeley.edu}\\
	\And
	Anastasios Kyrillidis \\
	Department of Computer Science\\
	Rice University\\
	\texttt{anastasios@rice.edu}\\
	\And
	Oluwasanmi Koyejo \\
	Department of Computer Science\\
	University of Illinois at Urbana-Champaign\\
	\texttt{sanmi@illinois.edu}\\
}
\begin{document}

\maketitle

\begin{abstract}
Iterative hard thresholding (IHT) is a projected gradient descent algorithm, known to achieve state of the art performance for a wide range of structured estimation problems, such as sparse inference.
In this work, we consider IHT as a solution to the problem of learning sparse discrete distributions. 
We study the hardness of using IHT on the space of measures. 
As a practical alternative, we propose a greedy approximate projection which simultaneously captures appropriate notions of sparsity in distributions, while satisfying the simplex constraint, and investigate the convergence behavior of the resulting procedure in various settings. 
Our results show, both in theory and practice, that IHT can achieve state of the art results for learning sparse distributions.
\end{abstract}


\section{Introduction}\label{sec:Introduction}
Probabilistic models provide a flexible approach for capturing uncertainty in real world processes, with a variety of applications which include latent variable models and density estimation, among others. Like other machine learning tools, probabilistic models can be enhanced by encouraging parsimony, as this captures useful inductive biases. In practice, this often improves the interpretability and generalization performance of the resulting models, and is particularly useful in applied settings with limited samples compared to the model degrees of freedom. One of the most effective parsimonious assumptions is sparsity. As such, learning sparse distributions is a problem of broad interest in machine learning, with many applications~\cite{Mitchell88SpikeSlab, ishwaran2005, George93Gibbs, Park08BayesianLasso, koyejo2014prior, khanna2015sparse, khanna17structured}. 


The majority of approaches for sparse probabilistic modeling have focused on the construction of appropriate priors based on inputs from domain experts. 
The technical challenges there involve the challenges of prior design and inference~\cite{George93Gibbs,Mitchell88SpikeSlab,Carvalho2010}, including methods that are additionally designed to exploit special structures~\cite{koyejo2014prior, Park08BayesianLasso, khanna17structured} .
More recently, there has been an interest in studying these algorithmic approaches from an optimization perspective~\cite{wibisono18a,locatello18a,DALALYAN2019}, with the goal of a deeper understanding and, in some cases, even suggesting improvements over previous methods~\cite{Huszar2012,Locatello2018Blackbox}.
In this work, we consider an optimization-based approach to learning sparse discrete distributions. Despite wide applicability, when compared to classical constrained optimization, there are limited studies that focus on the understanding, both in theory and in practice, of optimization methods over the space of probability densities, under sparsity constraints. 

Our present work proposes and investigates the use of Iterative Hard Thresholding (IHT~\cite{Blumenthal08IHT, kyrillidis2011recipes, needell2009cosamp, dai2009subspace, tropp2004greed}) for the problem of sparse probabilistic estimation. 
IHT is an iterative algorithm that is well-studied in the classical optimization literature. Further, there are known worst-case convergence guarantees and empirical studies~\cite{Jain14, khanna2018iht} that vouch for its performance. Our goal in this work is to investigate the convergence properties of IHT, when applied to probabilistic densities, and to evaluate its efficacy for learning sparse distributions.

However, transferring this algorithm from vector and matrix spaces to the space of measures is not straightforward.  While several of the technical pieces --such as the existence of a variational derivative and normed structure-- fall into place, the algorithm is an iterative one, that involves solving a projection subproblem in each iteration. We show that this subproblem is computationally hard in general, but provide an approximate procedure that we analyze under certain assumptions.  

Our contributions in this work are algorithmic and theoretical, with proof of concept empirical evaluation. 
We briefly summarize our contributions below. \vspace{-0.2cm}
\begin{itemize}[leftmargin=0.5cm]
	\item We propose the use of classical IHT for learning sparse distributions, and show that the space of measures meets the structural requirements for IHT. \vspace{-0.1cm}
	\item We study in depth the hardness of the projection subproblem, showing that it is NP-hard, and no polynomial-time algorithm exists that can solve it with guarantees. \vspace{-0.1cm}
	\item Since the projection problem is solved in every iteration, we propose a simple greedy algorithm and provide sufficient theoretical conditions, under which the algorithm provably approximates the otherwise hard projection problem. \vspace{-0.1cm}
	\item We draw on techniques from classical optimization to provide convergence rates for the overall IHT algorithm: i.e., we study after how many iterations will the algorithm guarantee to be within some small $\epsilon$ of the true optimum. \end{itemize}
In addition to our conceptual and theoretical results, we present empirical studies that support our claims. 



\section{Problem statement}\label{sec:problem}

\textbf{Preliminaries.}
We use bold characters to denote vectors.
Given a vector $\bm{v}$, we use $v_i$ to represent its $i$-th entry. 
We use calligraphic upper case letters to denote sets; \emph{e.g.}, $\mathcal{S}$.
With a slight abuse of notation, we will use lower case letters to denote probability distributions \emph{e.g.}, $p, q$, as well as functions \emph{e.g.}, $f$.
The distinction from scalars will be apparent from the context; we usually append functions with parentheses to distinguish from scalars.
We use upper case letters to denote functionals \emph{i.e.}, functions that take as an input other functions \emph{e.g.}, $F[p(\cdot)]$.
We use $[n]$ to denote the set $\{1,2,...n\}$. 
Given a set of indices $\mathcal{S} \subset [n]$, we denote the cardinality of $\mathcal{S}$ as $|\mathcal{S}|$.
Given a vector $\bm{x}$, we denote its support set \emph{i.e.}, the set of non-zero entries, as $\text{supp}(\bm{x})$. We use $\mathbb{P}\{e \}$ to denote the probability of event $e$.

Let $\mathcal{P}$ denote the set of discrete $n$-dimensional probability densities on an $n$-dimensional domain $\mathcal{X}$ : 
\begin{small}
\begin{align}
\mathcal{P} = \left\{ p(\cdot) ~:\mathcal{X} \to \mathbb{R}_+ ~~ | ~~ \sum_{\bm{x}\in \mathcal{X}} p(\bm{x}) =1 \right\}.
\end{align}
\end{small}
Let $\mathcal{S} \subset [n]$ denote a support set where $|\mathcal{S}| = k < n$. 
Let $\mathcal{X}_{\mathcal{S}} \subset \mathcal{X}$ denote the set of variables with support $\mathcal{S}$, \emph{i.e.}, 
\begin{align}
\mathcal{X}_{\mathcal{S}} = \big\{ \bm{x}\in \mathcal{X} ~~ | ~~ \text{supp}(\bm{x})\subseteq \mathcal{S} \big\}.
\end{align} 
The set of domain restricted densities, denoted by $\mathcal{P}_{\mathcal{S}}$, is the set of probability density functions supported on $\mathcal{X}_{\mathcal{S}}$; \emph{i.e.},
\begin{align}
\mathcal{P}_{\mathcal{S}} = \left\{ q(\cdot) \in \mathcal{P} \mid \forall \bm{x} \notin \mathcal{X}_{\mathcal{S}}, ~q(\bm{x})=0  \right\}.
\end{align}

Inversely, we denote the support of a domain restricted density $q(\cdot)\in \mathcal{P}_{\mathcal{S}}$ as $\text{supp}(q)=\mathcal{S}$. Next, we define the notion of sparse distributions.
\begin{definition}[\textbf{Distribution Sparsity~\cite{koyejo2014prior}}]
	\label{def:sparse_1}Let $\mathcal{D}_k=\cup_{|\mathcal{S}|\leq k} \mathcal{P}_{\mathcal{S}} \subseteq \mathcal{P}$
	\emph{i.e.}, the union of all possible $k$-sparse support domain restricted densities. We say that $p(\cdot)$ is $k$-sparse if $p(\cdot) \in \mathcal{D}_k$.
\end{definition} 
Note that while each component $\mathcal{P}_{\mathcal{S}}$ is a convex set, the union $\mathcal{D}_k$ is not. To see this, consider the convex combination of two $k$-sparse distributions $p_1$ and $p_2$ with disjoint supports $\mathcal{S}_1$ and $\mathcal{S}_2$ respectively. In general, the convex combination $\alpha p_1(\cdot) + (1-\alpha)p_2(\cdot); ~0<\alpha<1,$ has larger support; i.e., $|\mathcal{S}_1 \cup \mathcal{S}_2|>k$. As an aside, we note that unlike the vector case, its is straightforward to construct multiple definitions of distribution sparsity. For instance, another reasonable definition is via the set
$\mathcal{D}'_k=\{ p(\cdot) \in \mathcal{P}\mid p(\bm{x})=0 \text{ for all } \|\bm{x}\|_0>k \}$; i.e., distributions that assign zero probability mass to non-$k$-sparse vectors. Interestingly, $\mathcal{D}_k\subset \mathcal{D}_k' \subset \mathcal{P}$ in general, as any of the distributions in $\mathcal{D}_k$ must has a support with size less than $k$, which is not necessary for distributions in $\mathcal{D}_k'$. Motivated by prior work ~\cite{koyejo2014prior}, we use Definition~\ref{def:sparse_1} in this work.

\textbf{Vector sparsity.} While the proposed framework is developed for a specialized notion of sparsity i.e. along the dimensions of a multivariate discrete distribution, it is also applicable to alternative notions of distribution sparsity. One common setting is sparsity of the distribution itself $p(\cdot)$ when represented as a vector e.g. sparsifying the number of valid states of a univariate distribution such as a histogram. We outline how our framework can be applied to this setting in the Appendix ~\ref{app:aligned}.

\textbf{Problem setting.}
In this work, we focus on studying sparsity for the case of discrete densities. 
In particular, $\mathcal{X} \subset \mathbb{Z}^n$; i.e., $\bm{x}$ is an integer such that:
\begin{align}
\mathcal{X} = \left\{ \bm{x} \in \mathbb{Z}^n ~~ | ~~ \forall i\in [n], 0\leq x_i \leq m-1 \right\},
\end{align}
where $m$ is an integer. 
Therefore, $\bm{x}$ has $m^n$ valid positions. 
In other words, if we denote $X$ as a random variable from that distribution, then $X \in \mathcal{X}$ has  $m^n$ possible values, and $\mathbb{P}\{ X=\bm{x} \}=p(\bm{x})$.


Given a cost functional over distributions $F[\cdot]: \mathcal{P} \to \mathbb{R}$, we are interested in the following optimization criterion:
\begin{equation}
\begin{aligned}
& \underset{q}{\text{min}}
& & F\left[q\right]
& \text{subject to}
& & q \in \mathcal{D}_k,
\end{aligned} \label{eq:problem}
\end{equation}
where $\mathcal{D}_k=\cup_{\mathcal{S}:|\mathcal{S}|\leq k} \mathcal{P}_{\mathcal{S}} \subseteq \mathcal{P}$ is the $k$-sparsity constraint, as in Definition \ref{def:sparse_1}.
In words, we are interested in finding a \emph{distribution}, denoted as $q(\cdot)$, that ``lives'' in the $k$-sparse set of distributions, and minimizes the cost functional $F[\cdot]$. 
This is similar to classical \emph{sparse optimization} problems in literature \cite{donoho2006compressed, tibshirani2015statistical, sra2012optimization, huang2011learning}, but there are fundamental difficulties, both in theory and in practice, that require a different approach than standard iterative hard thresholding algorithms \cite{Blumenthal08IHT, kyrillidis2011recipes, needell2009cosamp, dai2009subspace, tropp2004greed}. 

We assume that the objective $F[\cdot]$ is a \emph{convex} functional over distributions.
\begin{definition}[\textbf{Convexity of $F[\cdot]$}]
	\label{def:convexity} 
	The functional $F[\cdot] ~:\mathcal{P}\to \mathbb{R}$ is convex if:
	\begin{equation}
		F \left[\theta q(\cdot)+(1-\theta) p(\cdot) \right] \leq \theta F[q(\cdot)]+(1-\theta)F[p(\cdot)],
	\end{equation}
	for all $q(\cdot), ~p(\cdot) \in \mathcal{P}$ and $\theta \in [0,1]$.
\end{definition}
Observe that, while $F[\cdot]$ is a convex functional, and $\mathcal{P}$ and $\mathcal{P}_{\mathcal{S}}$ are convex sets, $\mathcal{D}_k$ is not a convex set. Hence, the optimization problem~\eqref{eq:problem} is not a convex program.

Following the projected gradient descent approach, we require definitions of the gradient over $F[\cdot]$, as well as definitions of the projection. 
\begin{definition}[\textbf{Variational Derivative~\cite{engel2013density}}] \label{def:derivative}
	The variational derivative of $F[\cdot]:\mathcal{P}\to \mathbb{R}$ is a function, denoted as $\tfrac{\delta F}{\delta q}(\cdot): \mathcal{X} \rightarrow \mathbb{R}$, and satisfies:
	\begin{equation}
		\sum_\mathcal{X}  \tfrac{\delta F}{\delta q}(\bm{x}) \phi(\bm{x}) = \tfrac{\partial F\left[q+\epsilon \phi\right]}{\partial \epsilon} \Big |_{\epsilon=0}
	\end{equation}
	where $\phi:\mathcal{X}\to \mathbb{R}$ is an arbitrary function.
\end{definition}

\begin{definition}[\textbf{First-order Convexity}]
	\label{def:convex_2} 
	The functional $F[\cdot] : ~\mathcal{P}\to \mathbb{R}$ is convex if:
	\begin{equation}
		F\left[q(\cdot)\right]\geq F\left[p(\cdot)\right] + \left \langle \tfrac{\delta F}{\delta p}(\cdot), q(\cdot) - p(\cdot) \right \rangle
	\end{equation}
	for all $q(\cdot), ~p(\cdot)\in \mathcal{P}$.
\end{definition}

Here, we use the standard inner product for two densities: $\langle q(\cdot), p(\cdot) \rangle = \int_x q(x)p(x)$, or $\langle q(\cdot), p(\cdot) \rangle= \sum_x q(x)p(x)$ in the discrete setting. 

\newpage
\section{Algorithms}
\begin{wrapfigure}[10]{r}{7cm}	
	\vspace{0.3cm}
	\begin{minipage}{0.5\textwidth}
		\vspace{-1.1cm}
		\begin{algorithm}[H]
			\begin{algorithmic}[1]
				\STATE \textbf{Input:} $F[\cdot]:\mathcal{P}\to \mathbb{R}$, $k \in \mathbb{Z}_+$. 
				number of iters $T$, $p_{0}(\cdot) \in \mathcal{D}_k$, $\mu$.
				\textbf{Output:} $p_{T}\in \mathcal{D}_k$ 
				
				\STATE $t \leftarrow 0$
				\WHILE {$t<T$}
				\STATE $q_{t+1}(\cdot) = p_{t}(\cdot) - \mu \tfrac{\delta F}{\delta p_{t}}(\cdot)$
				\STATE $p_{t+1}(\cdot) = \Pi_{\mathcal{D}_k} \left(q_{t+1}\right)$
				\ENDWHILE
				\RETURN $p_T(\cdot)$
			\end{algorithmic}
			\caption{Distribution IHT} \label{alg:iht} 
		\end{algorithm}
	\end{minipage}
	\vspace{-0.3cm}
\end{wrapfigure}
Recall that our goal is to solve the optimization problem~\eqref{eq:problem}.
A natural way to solve it in an iterative fashion is using \emph{projected gradient descent}, where the projection step is over the set of sparse distributions $\mathcal{D}_k$.
This analogy makes the connection to iterative hard thresholding (IHT) algorithms, where the iterative recursion is:
\begin{align}
p_{t+1}(\cdot) = \Pi_{\mathcal{D}_k}\left(p_{t}(\cdot) - \mu \tfrac{\delta F}{\delta p_{t}}(\cdot)\right),
\end{align}
where $p_t(\cdot)$ denotes the current iterate, and $\Pi_{\mathcal{D}_k}(\cdot)$ denotes, in an abstract sense, the projection of the distribution function to the set of sparse distribution functions.
The consequent steps are analogous to those of regular IHT: given an initialization point, we iteratively $i)$ compute the gradient, $ii)$ perform the gradient step with step size $\mu$, $iii)$ ensure the computed approximate solution  satisfies our constraint in each iteration by projecting to $\mathcal{D}_k$.

\subsection{Projection onto $\mathcal{D}_k$}

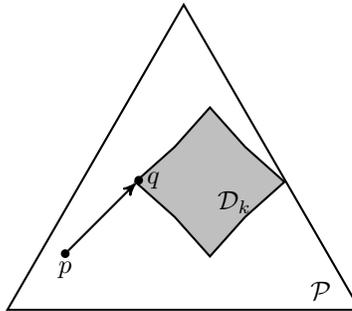
\begin{wrapfigure}[14]{r}{.5\textwidth}
	\centering
	\begin{tikzpicture}[thick, node distance=4ex, label distance=-0.5ex, ->,
	>=stealth', auto, scale=1.0, transform shape]
	
	\node[draw, regular polygon,regular polygon sides=3][]
	(p){$\phantom{ABCDEF}$};
	
	\node[] [below =-2.5ex of p] (pa) {};
	\node[] [right =9.0ex of pa] (pb) {$\mathcal{P}$};
	
	\node[draw, star,star points=4, fill=black!25] [above right =-10.1ex of p] (f)
	{$\phantom{ABc}$}; 
	
	\node[] [below =-5.5ex of f] (fa) {};
	\node[] [right =-1.0ex of fa] (fb) {$\mathcal{D}_k$};
	
	\node[] [left =-7.03ex of p] (qsa) {};
	\node[draw,fill=black,circle,inner sep=0pt,minimum size=2.5pt,label=0:$q$]
	[above =1.15ex of qsa] (qs) {}; 
	
	\node[draw,fill=black,circle,inner sep=0pt,minimum size=2.5pt,label=-90:$p$]
	[below left =8.0ex of qs] (ps) {};
	
	\draw[->] (ps) to (qs);
	
	\end{tikzpicture}
	\caption{Illustration of projection onto $\mathcal{D}_k$, with $q = \Pi_{\mathcal{D}_k}\left( p \right)$.}
\end{wrapfigure}

Consider the projection step with respect to the $\ell_2$-norm i.e.

\begin{equation}
\label{eq:proj_onto_Dk}
\Pi_{\mathcal{D}_k}\left( p(\cdot) \right) := \argmin_{q(\cdot) \in \mathcal{D}_k} \| q(\cdot) - p(\cdot) \|_2^2,
\end{equation}

where the $\ell_2$-norm is defined by the aforementioned inner product $\langle q(\cdot), p(\cdot) \rangle= \sum_{\bm{x}} q(\bm{x})p(\bm{x})$. 
The set $\mathcal{D}_k= \cup_{|\mathcal{S}| \leq k} \mathcal{P}_{\mathcal{S}}$ is a union of $(^n_k)=O(n^k)$ sparse sets $\mathcal{P}_{S}$ of different supports. 
Thus, if we denote $\texttt{T}_{\text{proj}}$ as the time to compute $\Pi_{\mathcal{P}_{S}}\left( p(\cdot) \right)$, then we need $O(n^k \cdot \texttt{T}_{\text{proj}})$ time for $\mathcal{D}_k$ projection using naive enumeration. 
One may reasonably conjecture the existence of more efficient implementations of the exact projection in \eqref{eq:proj_onto_Dk}, e.g., in polynomial time. 
In the following, we show that this is not the case. 

\subsection{On the tractability of sparse distribution $\ell_2$-norm projection}
The projection \eqref{eq:proj_onto_Dk} is iteratively solved in IHT (step 5 in Algorithm~\ref{alg:iht}). 
Thus, for the algorithm to be practical, it is important to study the tractability of the projection step. 
The combinatorial nature of $\mathcal{D}_k$ hints that this might not be the case. 
\begin{theorem}
	\label{thm:hardness}
	The sparse distribution $\ell_2$-norm projection problem~\eqref{eq:proj_onto_Dk} is NP-hard.
\end{theorem}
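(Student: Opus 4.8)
The plan is to reduce from the \textbf{densest-$k$-subgraph} problem, which is NP-hard. The strategy exploits the two-level structure of the projection~\eqref{eq:proj_onto_Dk}: for a \emph{fixed} support $\mathcal{S}$, the inner minimization over $\mathcal{P}_{\mathcal{S}}$ is an ordinary Euclidean projection onto a probability simplex and is tractable; all the difficulty lives in the \emph{combinatorial} choice of $\mathcal{S}$. I would engineer an instance in which that combinatorial choice is exactly a densest-subgraph selection.

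First I would solve the inner problem in closed form. Fix $\mathcal{S}$ with $|\mathcal{S}|\le k$ and write $P_{\mathcal{S}}=\sum_{\bm{x}\in\mathcal{X}_{\mathcal{S}}}p(\bm{x})$. Projecting a nonnegative $p$ onto the affine set $\{\sum_{\bm{x}\in\mathcal{X}_{\mathcal{S}}}q(\bm{x})=1\}$ simply adds the constant $(1-P_{\mathcal{S}})/|\mathcal{X}_{\mathcal{S}}|$ to every coordinate in $\mathcal{X}_{\mathcal{S}}$; since this shift is nonnegative whenever $P_{\mathcal{S}}\le 1$, the result is already nonnegative and hence equals the simplex projection $\Pi_{\mathcal{P}_{\mathcal{S}}}(p)$. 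Substituting back gives the per-support objective
\begin{equation}
\|\Pi_{\mathcal{P}_{\mathcal{S}}}(p)-p\|_2^2=\frac{(1-P_{\mathcal{S}})^2}{|\mathcal{X}_{\mathcal{S}}|}+\|p\|_2^2-\sum_{\bm{x}\in\mathcal{X}_{\mathcal{S}}}p(\bm{x})^2,
\end{equation}
so \eqref{eq:proj_onto_Dk} collapses to choosing $\mathcal{S}$ to minimize $g(\mathcal{S})=(1-P_{\mathcal{S}})^2/|\mathcal{X}_{\mathcal{S}}|-\sum_{\bm{x}\in\mathcal{X}_{\mathcal{S}}}p(\bm{x})^2$, since $\|p\|_2^2$ is a constant.

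Next I would build the reduction. Given a graph $G=([n],E)$ and budget $k<n$, set $m=2$ so that $\mathcal{X}=\{0,1\}^n$ and $\mathcal{X}_{\mathcal{S}}$ is the set of binary vectors supported on $\mathcal{S}$, whence $|\mathcal{X}_{\mathcal{S}}|=2^{|\mathcal{S}|}$. Place mass $w$ on each edge indicator $\bm{e}_{ij}$ with $(i,j)\in E$ (these have support $\{i,j\}$) and put the remaining mass $1-|E|w$ on the all-ones vector, whose support $n>k$ no feasible $\mathcal{S}$ can capture. Then $\bm{e}_{ij}\in\mathcal{X}_{\mathcal{S}}$ iff both endpoints lie in $\mathcal{S}$, so $P_{\mathcal{S}}=E_{\mathcal{S}}w$ and $\sum_{\bm{x}\in\mathcal{X}_{\mathcal{S}}}p(\bm{x})^2=E_{\mathcal{S}}w^2$, where $E_{\mathcal{S}}$ is the number of edges induced by $\mathcal{S}$. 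The objective becomes
\begin{equation}
g(\mathcal{S})=\frac{(1-E_{\mathcal{S}}w)^2}{2^{|\mathcal{S}|}}-E_{\mathcal{S}}w^2 .
\end{equation}
A one-line monotonicity check ($g'(E_{\mathcal{S}})=-2w(1-E_{\mathcal{S}}w)/2^{|\mathcal{S}|}-w^2<0$ once $w\le 1/|E|$ forces $P_{\mathcal{S}}\le 1$) shows $g$ is strictly decreasing in $E_{\mathcal{S}}$ at fixed $|\mathcal{S}|$, while for $w$ taken small enough (polynomially many bits suffice) the $2^{-|\mathcal{S}|}$ factor forces the minimizer to use the full budget $|\mathcal{S}|=k$. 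Hence an exact minimizer of $g$ is precisely a size-$k$ vertex set of maximum induced-edge count, and a polynomial-time solver for~\eqref{eq:proj_onto_Dk} would recover a densest $k$-subgraph, establishing NP-hardness.

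The main obstacle I anticipate is controlling the term $(1-P_{\mathcal{S}})^2/|\mathcal{X}_{\mathcal{S}}|$, which couples the captured-edge count to the normalization $2^{|\mathcal{S}|}$ and therefore to the cardinality of $\mathcal{S}$. I must (i) verify that the inner projection never thresholds coordinates to zero, so the clean closed form holds, and (ii) choose $w$ small enough that the monotonicity in $E_{\mathcal{S}}$ and the preference for $|\mathcal{S}|=k$ hold simultaneously, while keeping every number representable in polynomially many bits. Making these inequalities line up—rather than the graph-theoretic core of the argument—is where the real care is needed.
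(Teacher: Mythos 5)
Your proposal is correct, but it takes a genuinely different route from the paper. The paper reduces from Subset Sum: it builds a $\{0,1\}$-valued function $q_k$ on $\{0,1\}^n$ that equals $1$ exactly on indicator vectors of size-$k$ zero-sum subsets, calls the projection oracle once for each $k\in[n]$, and decides SSP by inspecting the projected value at $\bm{0}$ (which is $1/2^k$ when $q_k\equiv 0$ and $0$ when a point mass is optimal). You instead reduce from densest-$k$-subgraph in a single oracle call, and your argument pivots on the closed-form per-support objective $\|\Pi_{\mathcal{P}_{\mathcal{S}}}(p)-p\|_2^2=(1-P_{\mathcal{S}})^2/|\mathcal{X}_{\mathcal{S}}|+\|p\|_2^2-\sum_{\bm{x}\in\mathcal{X}_{\mathcal{S}}}p(\bm{x})^2$ — a formula the paper itself derives, but only later and for a different purpose (the analysis of greedy in Theorem~\ref{thm:greedy}, cf.\ equation~\eqref{eq:thm3_eq4}). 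Your calibration checks out: with mass $w$ on each edge indicator and the residue $1-|E|w$ parked on the all-ones vector (uncapturable since $k<n$), one gets $P_{\mathcal{S}}=E_{\mathcal{S}}w$ exactly, monotonicity in $E_{\mathcal{S}}$ holds for any $w\le 1/|E|$, and the full-budget preference needs $w\lesssim 2^{-k/2}$, which as you note is still only polynomially many bits. What your route buys is a strictly stronger statement: your input is a bona fide probability distribution, so the projection is hard even restricted to distribution inputs, whereas the paper's $q_k$ is a general (possibly identically zero) function — indeed the paper's Theorem~\ref{thm:no_constantfactor_approx} proof explicitly leans on allowing non-distribution inputs. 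What the paper's route buys is arithmetic simplicity: no weight tuning or cardinality-versus-edges trade-off inequalities, at the cost of $n$ oracle calls. Two loose ends you should spell out, both routine: recovering $\mathcal{S}$ from the returned density (query $\widehat{p}$ at the $n$ standard basis vectors $\bm{e}_i$; since the optimal shift $(1-P_{\mathcal{S}})/2^{|\mathcal{S}|}$ is strictly positive because the all-ones mass stays outside $\mathcal{X}_{\mathcal{S}}$, one has $\widehat{p}(\bm{e}_i)>0$ iff $i\in\mathcal{S}$), and verifying that a single $w$ satisfies both of your inequalities simultaneously, which it does since the budget constraint is the binding one.
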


\emph{Sketch of proof:} We show that the subset selection problem~\cite{kleinberg2006algorithm} can be reduced to the $\ell_2$-norm projection problem.
The complete proof is provided in the supplementary material.

As an alternative route, NP-hard problems can be often tackled sufficiently, by using approximate methods.
However, the following theorem states that the sparsity constrained optimization problem in \eqref{eq:proj_onto_Dk} is hard even to approximate, in the sense that no deterministic approximation algorithm exists that solves it in polynomial time.
\begin{theorem}
	\label{thm:no_constantfactor_approx}
	There exists no deterministic algorithm that can provide a constant factor approximation for the sparse distribution $\ell_2$-norm projection problem in polynomial time. 
	Formally, for given $q:\mathcal{X}\to \mathbb{R}$ with $\mathcal{X} \in \mathbb{R}^n$, let $p^\star(\cdot)$ be the optimal $\ell_2$-norm projection onto $\mathcal{D}_k$, and let $\widehat{p}(\cdot)$ be the solution found by any algorithm that operates in $O(poly(n))$ time. 
	Then, we can design problem instances, where the approximation ratio:
	\begin{align}
		\varphi = \frac{\| q(\cdot) - \widehat{p}(\cdot) \|^2_2}{\| q(\cdot) - p^\star(\cdot) \|^2_2} - 1,
	\end{align}
	cannot be bounded.
\end{theorem}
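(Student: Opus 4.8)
The plan is to argue by contradiction via a gap-introducing reduction. Suppose there were a deterministic algorithm $A$ running in $O(\mathrm{poly}(n))$ time that, on every instance, returned $\widehat{p}(\cdot)$ with bounded error $\varphi \le C$ for some absolute constant $C \ge 0$, i.e. $\|q(\cdot) - \widehat{p}(\cdot)\|_2^2 \le (1+C)\,\|q(\cdot) - p^\star(\cdot)\|_2^2$. I would then convert $A$ into a polynomial-time solver for the same subset-selection problem used in Theorem~\ref{thm:hardness}, placing an NP-hard problem in $\mathrm{P}$ (so the result holds under $\mathrm{P}\ne\mathrm{NP}$). The mechanism is to embed an arbitrary subset-selection instance into a projection instance in which the \emph{optimal} support $\mathcal{S}^\star$ attains a distance $a$ that can be driven arbitrarily close to $0$, while \emph{every other} admissible support of size at most $k$ attains distance at least a fixed $b>0$. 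Since $\mathcal{S}^\star$ encodes the subset-selection solution, locating it is exactly the hard step.

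Concretely, I would reuse the gadget behind Theorem~\ref{thm:hardness} but introduce a scaling parameter into the target $q(\cdot)$. On the ``good'' region $\mathcal{X}_{\mathcal{S}^\star}$ I place mass so that the restriction of $q(\cdot)$ is already nearly a valid probability vector, making its Euclidean distance to the simplex over $\mathcal{X}_{\mathcal{S}^\star}$ equal to a controllable quantity $a$; on every vector outside $\mathcal{X}_{\mathcal{S}^\star}$ I place small but nonzero values whose squared sum, together with the renormalization error incurred by any wrong support, stays bounded below by $b$. Using the decomposition $\min_{p \in \mathcal{P}_{\mathcal{S}}}\|q - p\|_2^2 = \sum_{\bm{x}\notin \mathcal{X}_{\mathcal{S}}} q(\bm{x})^2 + \min_{p\in \mathcal{P}_{\mathcal{S}}}\sum_{\bm{x}\in \mathcal{X}_{\mathcal{S}}}(q(\bm{x})-p(\bm{x}))^2$, I can compute both $a$ and $b$ and verify that $b/a$ grows without bound as the scaling parameter tends to its limit, while the instance size remains polynomial. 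This construction deliberately works in the regime $a>0$: whenever the optimal distance is exactly zero, the target already lies in $\mathcal{D}_k$ and is projected trivially by reading off its support, so the genuine hardness must sit at $a>0$, which the scaling provides.

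With the gap in hand the contradiction is immediate. Given any target constant $C$, choose the scaling so that $b/a - 1 > C$. If $A$ returned a solution supported on any set other than $\mathcal{S}^\star$, then $\|q - \widehat{p}\|_2^2 \ge b$ while $\|q - p^\star\|_2^2 = a$, forcing $\varphi \ge b/a - 1 > C$ and violating the assumed guarantee. Hence $A$ must recover the optimal support $\mathcal{S}^\star$, revealing the subset-selection solution in polynomial time. Since $C$ was arbitrary, no deterministic polynomial-time algorithm can guarantee any fixed finite $\varphi$, which is the claim.

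The step I expect to be the main obstacle is the \emph{simultaneous} lower bound: showing that \emph{every} admissible $\mathcal{S}\ne\mathcal{S}^\star$ with $|\mathcal{S}|\le k$ incurs distance at least $b$, rather than merely ruling out a few obvious competitors. Because a single toggled dimension moves exponentially many vectors $\bm{x}\in\mathcal{X}$ in or out of $\mathcal{X}_{\mathcal{S}}$, I must control the dropped squared mass and the simplex-renormalization term uniformly over the combinatorially many wrong supports, and guarantee that the small off-region values never let some wrong support reconstruct a near-distribution and collapse the gap. Making these estimates hold uniformly, while keeping $a$ drivable to $0$ and the encoding polynomial, is the delicate part; the rest is the routine contradiction above.
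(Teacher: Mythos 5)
Your proposal takes a different route from the paper, and it contains genuine gaps --- two of them. First, what you would prove is strictly weaker than the theorem. A gap reduction from subset sum yields only \emph{conditional} hardness (``no deterministic polynomial-time constant-factor approximation unless $\mathrm{P}=\mathrm{NP}$''), whereas the paper's proof is unconditional and information-theoretic: the input $q$ lives on a domain of $m^n$ points, so any algorithm running in $T=O(\mathrm{poly}(n))$ time can evaluate $q$ at only $T\ll\binom{n}{k}$ positions. The paper runs the algorithm on the all-zero function, records the queried points $\bm{x}_1,\dots,\bm{x}_T$ and the returned support $\mathcal{S}$, picks an unqueried $\bm{x}^\star$ with $|\mathrm{supp}(\bm{x}^\star)|=k$ and $\mathrm{supp}(\bm{x}^\star)\neq\mathcal{S}$, and plants a spike $q(\bm{x}^\star)=1+\delta$ with $q=0$ elsewhere. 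By determinism the algorithm sees the same (zero) values at every query, hence returns the same $\mathcal{S}$, paying $\|q-\widehat{p}\|_2^2\geq 1/|\mathcal{X}_{\mathcal{S}}|+(1+\delta)^2$, while $p^\star$ is the point mass at $\bm{x}^\star$ with $\|q-p^\star\|_2^2=\delta^2$; letting $\delta\to 0$ makes $\varphi$ unbounded with no complexity assumption at all. This also delivers, for free, the ``regime $a>0$'' you engineer by scaling: the paper's $\delta^2$ is exactly your $a$, and the dropped spike supplies your $b\approx 1$.

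Second, even granting the conditional goal, the reduction as sketched fails on subset-sum instances with many solutions --- and this is precisely the uniform-lower-bound step you flagged as delicate. If the instance has $r$ zero-sum subsets of size $k$, the locally checkable gadget of Theorem~\ref{thm:hardness} places a spike at $r$ points with pairwise distinct size-$k$ supports. Any $k$-sparse distribution is supported inside a single $\mathcal{X}_{\mathcal{S}}$ and, since each spike's support has size exactly $k$, it can cover at most one spike; so the \emph{optimum itself} must drop $r-1$ spikes, giving $a\approx r-1$, while a support containing no solution pays only $b\approx r$. The ratio $b/a=1+O(1/r)$ collapses as $r$ grows, so a constant-factor algorithm is never forced to reveal a zero-sum subset and the contradiction evaporates. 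Repairing this requires isolating a unique solution, but Valiant--Vazirani isolation is randomized, which conflicts with the theorem's ``deterministic'' claim, and no deterministic hardness of unique subset sum is available. A smaller, repairable defect: you define $q$ by placing mass ``on $\mathcal{X}_{\mathcal{S}^\star}$,'' i.e., in terms of the optimal support the reduction cannot know; the construction must instead use a per-point checkable rule, as in the proof of Theorem~\ref{thm:hardness}. The paper's query-indistinguishability argument sidesteps all of these issues.
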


The proof of the theorem is provided in the supplementary material.
Through Theorems~\ref{thm:hardness} and \ref{thm:no_constantfactor_approx}, we have shown that the distribution sparse $\ell_2$-norm projection problem is hard, and thus the applicability of IHT on the space of densities seems not to be well-established to be practical. 
This may be surprising, in light of results in a variety of domains where it is known to be effective.
For example, in case of vectors, a simple $O(n)$ selection algorithm solves the projection problem \emph{optimally} \cite{duchi2008efficient}. Similarly, on the space of matrices for low rank IHT, the projection onto the top-$k$ ranks is optimally solved by an SVD \cite{kyrillidis2014matrix}.

\subsection{A greedy approximation}
\begin{wrapfigure}[12]{r}{8.0cm}
	\begin{minipage}{0.6\textwidth}
		\vspace{-0.8cm}
		\begin{algorithm}[H]
			\begin{algorithmic}[1]
				\STATE \textbf{Input:} $n$-dimensional function $q: \mathcal{X}\to \mathbb{R}$ and sparsity level $k$.
				\STATE \textbf{Output:} A distribution $p(\cdot) \in \mathcal{D}_k$	
				\STATE $\mathcal{S}:=\emptyset$
				\WHILE {$|\mathcal{S}|<k$}
				\STATE $j\in \argmin_{i\in [n]\backslash \mathcal{S}} \left\{ \min_{p\in \mathcal{P}_{\mathcal{S} \cup i} } \|p(\cdot) - q(\cdot)\|_2^2 \right\}$
				\STATE $\mathcal{S} :=\mathcal{S} \cup j$
				\ENDWHILE
				\RETURN $\arg \min_{p\in \mathcal{P}_{\mathcal{S}}} \|p(\cdot) - q(\cdot)\|_2^2$
			\end{algorithmic}
			\caption{Greedy Sparse Projection (GSProj)}  \label{alg:proj} 
		\end{algorithm}
	\end{minipage}
\end{wrapfigure}
In contrast to the results of Theorems~\ref{thm:hardness} and \ref{thm:no_constantfactor_approx}, we have observed that a simple greedy support selection seems effective in practice. Thus, we simply consider replacing exact projection to $\mathcal{D}_k$ by greedy selection.  


Consider Algorithm~\ref{alg:proj} when the input is not necessarily a distribution, \emph{i.e.}, $\sum_{\bm{x} \in \mathcal{X}} q(\bm{x})\neq 1$. 
The key procedure of the projection is line 5, where the inner $\min(\cdot)$ is the projection of $q(\cdot)$ on a set of domain restricted densities. 
Let $\widehat{p}(\cdot)$ denote this projection, \emph{i.e.}, $\widehat{p}(\cdot) = \arg \min_{p(\cdot) \in \mathcal{P}_{\mathcal{S}}} \|p(\cdot)-q(\cdot)\|_2^2$. 
Since, by definition $\widehat{p}(\bm{x})=0$ for any $\bm{x} \notin \mathcal{X}_{\mathcal{S}}$, we only need to calculate $\widehat{p}(\bm{x})$ where $\bm{x} \in \mathcal{X}_{\mathcal{S}}$, and this can be reformulated as:
\begin{align}
\argmin_{p(\cdot)}   \sum_{\bm{x} \in \mathcal{X}_{\mathcal{S}}} (p(\bm{x})-q(\bm{x}))^2 \qquad \text{s.t.} ~~\sum_{\bm{x} \in \mathcal{X}_{\mathcal{S}}} p(\bm{x})=1 \quad \text{and} \quad \forall_{\bm{x} \in \mathcal{X}_{\mathcal{S}}} p(\bm{x})\geq 0,
\end{align}
which is essentially $\ell_2$-norm projection onto a simplex $\{p(\bm{x})\mid\sum_{\bm{x} \in \mathcal{X}_{\mathcal{S}}} p(\bm{x})=1, \forall_{\bm{x} \in \mathcal{X}_{\mathcal{S}}} p(\bm{x})\geq 0\}$. 
This $\ell_2$-norm projection onto the simplex can be solved efficiently and easily (See~\cite{kyrillidis2013sparse}).

When $p(\cdot)$ is a distribution, we can analytically compute its projection on any support restricted domain. 
Given support $\mathcal{S}$, the exact projection of a distribution $p(\cdot)$ onto $\mathcal{P}_{\mathcal{S}}$ is:
\begin{align}
\label{eq:proj_supp}
\arg\min_{q\in \mathcal{P}_{\mathcal{S}}} \|q(\cdot)-p(\cdot)\|_2^2.
\end{align}
In our setting, the above problem can be written as
\begin{align}
\arg\min_{q\in \mathcal{P}_{\mathcal{S}}} \|q(\cdot)-p(\cdot)\|_2^2 &= \argmin_{q\in \mathcal{P}_{\mathcal{S}}} \langle q(\cdot)-p(\cdot), q(\cdot)-p(\cdot) \rangle 
												    =\argmin_{q\in \mathcal{P}_{\mathcal{S}}} \sum_{\bm{x} \in \mathcal{X}} \left( q(\bm{x})-p(\bm{x})\right)^2 \\ 
												    &=\argmin_{q\in \mathcal{P}_{\mathcal{S}}} \sum_{\bm{x} \in \mathcal{X}_{\mathcal{S}}} \left( q(\bm{x})-p(\bm{x}) \right)^2 + \sum_{\bm{x} \in \mathcal{X}, \bm{x} \notin \mathcal{X}_{\mathcal{S}}} p(\bm{x})^2 .
\end{align}
The last equation is due to definition of $\mathcal{P}_{\mathcal{S}}$ and $\mathcal{X}_{\mathcal{S}}$. 
Since $p(\cdot)$ is constant, we can eliminate the last term. 
Further, since $q\in \mathcal{P}_{\mathcal{S}}$, we have that $q(\bm{x})=0$ for every $\bm{x} \notin \mathcal{X}_{\mathcal{S}}$. 
The resulting problem is:
\begin{align}\label{eq:equiv_proj}
\argmin_{q\in \mathcal{P}_{\mathcal{S}}} \sum_{\bm{x} \in \mathcal{X}_{\mathcal{S}}} \left( q(\bm{x})-p(\bm{x}) \right)^2 \quad \text{s.t.} \quad \sum_{\bm{x} \in \mathcal{X}_{\mathcal{S}}} q(\bm{x}) = 1.
\end{align}

Denote $\sum_{\bm{x} \in \mathcal{X}_{\mathcal{S}}} p(\bm{x})=C \leq 1$. Applying the Quadratic Mean-Arithmetic Mean inequality to equation~\eqref{eq:equiv_proj}, we have:
\[
\sum_{\bm{x} \in \mathcal{X}_{\mathcal{S}}} \left( q(\bm{x})-p(\bm{x}) \right)^2  \geq  \left(1-C\right)^2/|\mathcal{X}_{\mathcal{S}}| \quad \text{s.t.} \quad \sum_{\bm{x} \in \mathcal{X}_{\mathcal{S}}} q(\bm{x}) = 1 
\]
The equality can be achieved when $q(\bm{x})-p(\bm{x})$ is the same for every $\bm{x} \in \mathcal{X}_{\mathcal{S}}$. Therefore we have the optimal solution to Problem~\eqref{eq:proj_supp}:

\[
q^\star_{\mathcal{S}}(\bm{x})=\left\{
\begin{array}{ll}
p(\bm{x})+\frac{1-C}{|\mathcal{X}_{\mathcal{S}}|}, \quad & \bm{x} \in \mathcal{X}_{\mathcal{S}} \\
0, \quad &\bm{x} \notin \mathcal{X}_{\mathcal{S}}
\end{array}
\right.
\]

\paragraph{Computational complexity.}
The time we need to solve Problem \eqref{eq:proj_supp} is $O(|\mathcal{X}_{\mathcal{S}}|)$, i.e. the time to compute $C$. However, to compute the norm $\|q(\cdot) - p(\cdot)\|_2^2$ we still need $O(|\mathcal{X}|)$ time, as $p(\bm{x})$ is not necessarily zero at any $\bm{x} \in \mathcal{X}$. 
As a result, we need $O(n^k(|\mathcal{X}|+|\mathcal{X}_{\mathcal{S}}|))$ time to enumerate for an optimal solution of the $\ell_2$-norm projection. If we consider the integer lattice $\mathcal{X}$, as stated in the problem setting, then $|\mathcal{X}|=m^n$ and $|\mathcal{X}_{\mathcal{S}}|=m^k$, rendering the time complexity $O(n^km^n)$. 
However, Algorithm~\ref{alg:proj} has much lower time complexity. In each iteration, the greedy method selects an element to put into $\mathcal{S}$ that maximize the gain, which requires $k$ iterations. It need not to consider the exact $\ell_2$-norm $\|q(\cdot) - p(\cdot)\|^2_2$ in each iteration, only the increment for each $e$ from $n$ options. To compute the increment, no more than $|\mathcal{X}_{\mathcal{S}}|$ terms are added, which requires compute of  $O(|\mathcal{X}_{\mathcal{S}}|)$ time complexity. All together, the greedy method requires $O(k|\mathcal{X}_{\mathcal{S}}|)$ time to operate, or $O(nkm^k)$ in our integer lattice setting, which is far less that the enumeration method's $O(n^km^n)$.

\subsection{When Greedy is Good}

We have shown in the proof of Theorem~\ref{thm:no_constantfactor_approx} that there always exist extreme examples that are hard to solve.
Thus, in the most general sense, and without further assumptions, one can find pathological cases which make the problem hard. However, we find that the greedy approach works well empirically. In this section, we consider sufficient conditions for tractability of the problem. Our conditions boil down to structural assumptions on $F[\cdot]$ which match standard assumptions in the literature. 

To build further intuition, consider line 4 in Algorithm~\ref{alg:iht}, where the parameter passed to the greedy method is $q(\cdot) = p(\cdot) -\mu \frac{\delta F}{\delta p}(\cdot)$, and $p(\cdot)$ is already a $k$-sparse distribution. Denote the support of $p(\cdot)$ as $\mathcal{S}$; we can see that $|\mathcal{S}|\leq k$.
Therefore,  that $q(\cdot)$ is close to $k$-sparse when the step size $\mu$ is small. Thus, while the general problem~\eqref{eq:proj_onto_Dk} may be a lot harder, there is reason to conjecture that under certain conditions, a simple greedy algorithm performs well. Next, we state these assumptions formally.



\begin{assumption}
	[\textbf{Strong Convexity/Smoothness}] \label{assum:rsc_rss}
	The objective $F[\cdot]$ satisfies Strong Convexity/Smoothness with respect to $\alpha$ and $\beta$ if:
	\begin{equation}
	\frac{\alpha}{2}\|p_1(\cdot)-p_2(\cdot)\|^2_2 \leq F[p_1(\cdot)] - F[p_2(\cdot)]- \left\langle \frac{\delta F}{\delta p_1}(\cdot), p_2(\cdot)-p_1(\cdot) \right\rangle \leq \frac{\beta}{2}\|p_1(\cdot)-p_2(\cdot)\|^2_2
	\end{equation}
\end{assumption}

For the sake of simplicity in exposition, we have assumed strong convexity to hold over the entire domain (which can be a restrictive assumption). As will be clear from the proof analysis, this assumption can easily be tightened to a restricted strong convexity assumption; see, e.g., \cite{agarwal2010fast}. This detail is left for a longer version of this manuscript.

\begin{assumption}[\textbf{Lipschitz Condition}]\label{assum:lips}
	The functional $F:\mathcal{P}\to \mathbb{R}$ satisfies the Lipschitz condition with respect to $L$, in $k$-sparse domain $\mathcal{D}_k$ is
	\begin{equation}
	\left| F[p_1(\cdot)]-F[p_2(\cdot)] \right| \leq L\|p_1(\cdot)-p_2(\cdot)\|_2
	\end{equation}
	This assumption implies that 
	\begin{equation}
	\left\|\frac{\delta F}{\delta p}(\cdot)\right\|_2\leq L. 
	\end{equation}
\end{assumption}

Using the strong convexity, smoothness, and Lipschitz assumptions, we are able to provide analysis for when greedy works well. This is encapsulated in Theorem~\ref{thm:greedy}.

\begin{theorem}\label{thm:greedy}
	Given $n$-dimensional function $q(\cdot)=p(\cdot)-\mu \frac{\delta F}{\delta p}(\cdot)$, where $p(\cdot)$ is an $n$-dimensional $k$-sparse distribution and $\text{supp}(p(\cdot))=\mathcal{S}'$, Algorithm~\ref{alg:proj} finds the optimal projection to domain $\mathcal{P}_{\mathcal{S}'}$ if $F[\cdot]$ satisfies Assumption~\ref{assum:lips}, $\mu$ is sufficiently small and there are enough positions $\bm{x} \in \mathcal{X}_{\mathcal{S}'}$ where $p(\bm{x})>0$,  i.e., satisfies inequality~\eqref{eq:thm3_eq2} and inequality~\eqref{eq:thm3_eq5}.
\end{theorem}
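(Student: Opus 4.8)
The plan is to show that the greedy support selection in Algorithm~\ref{alg:proj} picks, at every iteration, a coordinate belonging to $\mathcal{S}'$, so that after its iterations the selected support equals $\mathcal{S}'$ and the returned density coincides with the closed-form projection $q^\star_{\mathcal{S}'}$ derived above for Problem~\eqref{eq:proj_supp}. The whole argument is an induction on the greedy iterate, with inductive hypothesis $\mathcal{S}_t \subseteq \mathcal{S}'$, which holds trivially at $t=0$ since $\mathcal{S}_0 = \emptyset$. The engine of the induction is a comparison, at a fixed current set $\mathcal{S}_t \subseteq \mathcal{S}'$, between the marginal decrease of the projection objective obtained by adding an ``in-support'' coordinate $i \in \mathcal{S}' \setminus \mathcal{S}_t$ and that obtained by adding an ``out-of-support'' coordinate $i \notin \mathcal{S}'$.

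First I would make the objective explicit. Using the closed form obtained for~\eqref{eq:proj_supp}, for any support $\mathcal{S}$ the value attained at line 5 is $\mathrm{obj}(\mathcal{S}) = \tfrac{(1-C_{\mathcal{S}})^2}{|\mathcal{X}_{\mathcal{S}}|} + \sum_{\bm{x}\notin\mathcal{X}_{\mathcal{S}}} q(\bm{x})^2$, where $C_{\mathcal{S}} = \sum_{\bm{x}\in\mathcal{X}_{\mathcal{S}}} q(\bm{x})$ and $|\mathcal{X}_{\mathcal{S}}| = m^{|\mathcal{S}|}$. The greedy rule selects $\argmin_i \mathrm{obj}(\mathcal{S}_t\cup i)$, equivalently it maximizes the gain $\Delta(i) := \mathrm{obj}(\mathcal{S}_t) - \mathrm{obj}(\mathcal{S}_t\cup i)$, so the inductive step reduces to certifying $\min_{i \in \mathcal{S}'\setminus\mathcal{S}_t}\Delta(i) > \max_{i\notin\mathcal{S}'}\Delta(i)$.

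Next I would exploit the structure $q = p - \mu\tfrac{\delta F}{\delta p}$ together with Assumption~\ref{assum:lips}. The key observation is that any newly covered position when adding $i\notin\mathcal{S}'$ has $i$ in its support, hence lies outside $\mathcal{X}_{\mathcal{S}'}$, where $p$ vanishes; there $q(\bm{x}) = -\mu\tfrac{\delta F}{\delta p}(\bm{x})$, so each such contribution is $O(\mu^2)$ and, summed, is bounded by $\mu^2 L^2$ via $\|\tfrac{\delta F}{\delta p}\|_2 \le L$. This yields an upper bound of order $\mu^2$ on $\max_{i\notin\mathcal{S}'}\Delta(i)$. Conversely, adding $i\in\mathcal{S}'\setminus\mathcal{S}_t$ covers positions on which $p$, and thus $q$ up to an $O(\mu)$ shift, places genuine mass, producing a substantial increase in $\sum_{\bm{x}\in\mathcal{X}_{\mathcal{S}}}q(\bm{x})^2$ and a reduction of the uncovered mass $1-\sum_{\bm{x}\in\mathcal{X}_{\mathcal{S}_t}}p(\bm{x})$; moreover the normalizer $|\mathcal{X}_{\mathcal{S}}|$ grows by the same factor $m$ as in the out-of-support case, so the in-support choice also dominates on the $(1-C_{\mathcal{S}})^2/|\mathcal{X}_{\mathcal{S}}|$ term. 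Comparing the two estimates produces inequality~\eqref{eq:thm3_eq2}, and demanding that the in-support gain strictly exceed the out-of-support gain — which holds once $\mu$ is small enough and there are enough positions $\bm{x}\in\mathcal{X}_{\mathcal{S}'}$ with $p(\bm{x})>0$ to keep the recovered mass bounded below — is exactly the sufficient condition~\eqref{eq:thm3_eq5}. Under these two inequalities greedy picks an element of $\mathcal{S}'$, which closes the induction.

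I expect the main obstacle to be controlling the term $\tfrac{(1-C_{\mathcal{S}})^2}{|\mathcal{X}_{\mathcal{S}}|}$ uniformly across the induction. This term is non-monotone and couples the captured mass $C_{\mathcal{S}}$ with the geometrically growing count $|\mathcal{X}_{\mathcal{S}}| = m^{|\mathcal{S}|}$: when $\mathcal{S}_t$ has not yet captured all of $p$'s mass, $1-C_{\mathcal{S}_t}$ is $\Omega(1)$, so even an out-of-support coordinate produces a non-negligible gain simply by enlarging the normalizer. The delicate part is to verify that the in-support gain still dominates for every partial set $\mathcal{S}_t$ and every out-of-support candidate simultaneously; this is precisely where the ``enough positions with $p(\bm{x})>0$'' hypothesis enters, lower-bounding the mass that an in-support coordinate recovers and thereby guaranteeing a uniform gap that survives the comparison for all sufficiently small $\mu$.
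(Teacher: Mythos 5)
Your overall architecture---induct on the greedy iterate with $\mathcal{S}_t\subseteq\mathcal{S}'$, compare in-support against out-of-support additions at each step, use the Lipschitz bound of Assumption~\ref{assum:lips} to control the $\mu$-perturbation, and invoke the closed-form projection together with a positivity condition on $p$---is the same as the paper's. But there are two concrete gaps in the execution. First, your explicit objective $\mathrm{obj}(\mathcal{S}) = (1-C_{\mathcal{S}})^2/|\mathcal{X}_{\mathcal{S}}| + \sum_{\bm{x}\notin\mathcal{X}_{\mathcal{S}}} q(\bm{x})^2$ applies the uniform-shift solution of Problem~\eqref{eq:proj_supp} to $q = p - \mu\frac{\delta F}{\delta p}$, but that closed form was derived under the hypothesis that the input is a distribution. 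The input $q$ is negative wherever $p(\bm{x})=0$ and $\frac{\delta F}{\delta p}(\bm{x})>0$, so the nonnegativity constraints of the simplex projection can activate, the optimizer becomes a thresholded shift, and the value is no longer $(1-C_{\mathcal{S}})^2/|\mathcal{X}_{\mathcal{S}}|$. This is not salvageable by ``$\mu$ small'' alone, because the shift $(1-C_{\mathcal{S}})/|\mathcal{X}_{\mathcal{S}}|$ competing against the $O(\mu)$ negative entries can itself be arbitrarily small (e.g., once $\mathcal{S}$ has captured most of the mass). The paper never computes projections of $q$ in closed form; it instead sandwiches $\|\hat{q}_{\mathcal{S}} - q\|_2$ between $\|\hat{p}_{\mathcal{S}} - p\|_2 \pm \mu L$ via the triangle inequality and $\|\frac{\delta F}{\delta p}\|_2\le L$ (inequality~\eqref{eq:thm3_eq1}), and applies the closed form only to the genuine distribution $p$, for which the uniform shift is always feasible. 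Your argument needs this perturbation step, or an equivalent, to be sound.

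Second, your claimed bound of order $\mu^2 L^2$ on $\max_{i\notin\mathcal{S}'}\Delta(i)$ is false as stated: the gain from adding \emph{any} coordinate includes the decrease of the normalizer term from roughly $(1-C)^2/m^{|\mathcal{S}_t|}$ to $(1-C)^2/m^{|\mathcal{S}_t|+1}$, which is first order in the uncovered mass, not $O(\mu^2)$---you flag this yourself in your ``main obstacle'' paragraph but leave it unresolved. The resolution is immediate once you compare candidates rather than gains: since $\mathrm{obj}(\mathcal{S}_t)$ is common to all candidates, the greedy choice depends only on $\mathrm{obj}(\mathcal{S}_t\cup i)$ versus $\mathrm{obj}(\mathcal{S}_t\cup j)$, and these supports have equal cardinality, so $|\mathcal{X}_{\mathcal{S}_t\cup i}| = |\mathcal{X}_{\mathcal{S}_t\cup j}|$ and the normalizer growth cancels identically. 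This is exactly how the paper proceeds: the difference $\|\hat{p}_{\mathcal{S}\cup i}-p\|_2^2 - \|\hat{p}_{\mathcal{S}\cup j}-p\|_2^2$ collapses (using that $p$ vanishes on $\mathcal{X}_{\mathcal{S}\cup i}\setminus\mathcal{X}_{\mathcal{S}}$ when $i\notin\mathcal{S}'$) to terms involving only the mass $\sum_{\bm{x}\in\mathcal{X}_{\mathcal{S}\cup j}\setminus\mathcal{X}_{\mathcal{S}}} p(\bm{x})$ recovered by the in-support coordinate, which is strictly positive under the ``enough positions'' hypothesis, yielding \eqref{eq:thm3_eq5}; combining this uniform gap with the $\pm\mu L$ sandwich then gives the step-size condition \eqref{eq:thm3_eq2}. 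With these two repairs your plan coincides with the paper's proof.
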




%
%

\subsection{Convergence Analysis}\label{sec:theory}

Next, we analyze the convergence of the overall Algorithm~\ref{alg:iht} with greedy projections. 
While Theorem~\ref{thm:greedy} provides sufficient conditions for exact projection using the greedy approach, in practice due to computational precision issues and/or violation of the stated assumptions, the solution may not provide an exact projection. Thus, it is prudent to assume that the inner projection subproblem is solved within some approximation as quantified in the following.

\begin{definition}
	\textbf{Approximate $\ell_2$-norm projection}. We define $\widehat{\Pi}_{\mathcal{D}_k}(\cdot)$ as the approximate projection onto sparsity domain and distribution space, with approximation parameter, $\phi$, as:
	\begin{equation}
	\left\|p(\cdot)-\widehat{\Pi}_{\mathcal{D}_k}(p(\cdot))\right\|^2_2\leq (1+\phi)\left\| p(\cdot)-\Pi_{\mathcal{D}_k}(p(\cdot)) \right\|^2_2
	\end{equation}
\end{definition}



Next, we present our main convergence theorem.
\begin{theorem}\label{theorem:convergence_2}
	Suppose $F$ satisfies assumptions \ref{assum:rsc_rss} and \ref{assum:lips}. Furthermore, assume that the projection step in Algorithm~\ref{alg:iht} is solved $\phi$-approximately. Let the step size $\mu=1/\beta$, and $\|p_0(\cdot) - p^\star(\cdot)\|_2\leq L/(2\alpha)$.
	Then if $\frac{\beta}{\alpha}\in (2-\frac{1}{1+\phi},2)$, IHT (Algorithm~\ref{alg:iht}) with $T\geq \log_{\eta}\frac{\epsilon}{F[p_0(\cdot)]-F[p^\star(\cdot)]-c}$ iterations achieves $F[p_T(\cdot)]\leq F[p^\star(\cdot)]+c+\epsilon$ , where $\eta= 1-(1+\phi)(2-\beta/\alpha)$ and $c=\frac{\left( \phi/(2\beta) +(1+\phi)(\beta-\alpha)/(2\alpha^2) \right)L^2}{(1+\phi)(2-\beta/\alpha)}$.
\end{theorem}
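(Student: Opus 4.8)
The plan is to reduce everything to a one-step contraction of the form $\Delta_{t+1} \le \eta\,\Delta_t + c'$, where $\Delta_t := F[p_t(\cdot)] - F[p^\star(\cdot)]$, and then unroll this recursion. I would first record the basic IHT descent estimate. Writing $g_t := \tfrac{\delta F}{\delta p_t}(\cdot)$, $q_{t+1}(\cdot) = p_t(\cdot) - \mu g_t$, and $e_{t+1}(\cdot) := p_{t+1}(\cdot) - q_{t+1}(\cdot)$, I would start from the smoothness (upper) bound of Assumption~\ref{assum:rsc_rss} applied to $(p_t,p_{t+1})$, expand $p_{t+1} - p_t = -\mu g_t + e_{t+1}$, and use $\mu = 1/\beta$. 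The cross terms involving $\langle g_t, e_{t+1}\rangle$ cancel exactly, leaving the clean estimate
\begin{equation}
F[p_{t+1}(\cdot)] \le F[p_t(\cdot)] - \tfrac{\mu}{2}\|g_t\|_2^2 + \tfrac{1}{2\mu}\|e_{t+1}(\cdot)\|_2^2 .
\end{equation}

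Next I would convert the projection error into a comparison with $p^\star(\cdot)$. Since $p^\star(\cdot) \in \mathcal{D}_k$, the exact projection is at least as close to $q_{t+1}$ as $p^\star$, so the $\phi$-approximate projection guarantee gives $\|e_{t+1}\|_2^2 \le (1+\phi)\|p^\star - q_{t+1}\|_2^2$. Expanding $p^\star - q_{t+1} = (p^\star - p_t) + \mu g_t$ and substituting into the descent estimate, the $\|g_t\|_2^2$ contributions combine into $\tfrac{\phi\mu}{2}\|g_t\|_2^2$. I would then apply the strong-convexity (lower) bound of Assumption~\ref{assum:rsc_rss} at $p_t$ to the surviving inner product, i.e. $\langle g_t, p^\star - p_t\rangle \le -\Delta_t - \tfrac{\alpha}{2}\|p^\star - p_t\|_2^2$. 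After collecting the $\|p^\star - p_t\|_2^2$ coefficients (which equal $\tfrac{(1+\phi)(\beta-\alpha)}{2}$ once $\mu=1/\beta$ is used) and subtracting $F[p^\star]$, this yields
\begin{equation}
\Delta_{t+1} \le -\phi\,\Delta_t + \tfrac{(1+\phi)(\beta-\alpha)}{2}\,\|p^\star(\cdot) - p_t(\cdot)\|_2^2 + \tfrac{\phi\mu}{2}\,\|g_t\|_2^2 .
\end{equation}

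The decisive step is to turn $\|p^\star - p_t\|_2^2$ into a multiple of $\Delta_t$ plus a constant. Here I would invoke the strong-convexity lower bound at $p^\star$, namely $\tfrac{\alpha}{2}\|p^\star - p_t\|_2^2 \le \Delta_t - \langle \tfrac{\delta F}{\delta p^\star}(\cdot), p_t - p^\star\rangle$, and control the cross term by Cauchy--Schwarz together with the Lipschitz bound $\|\tfrac{\delta F}{\delta p^\star}(\cdot)\|_2 \le L$ (Assumption~\ref{assum:lips}) and the radius bound $\|p_t - p^\star\|_2 \le L/(2\alpha)$, which bounds it by $L^2/(2\alpha)$. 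This produces the quadratic-growth estimate $\|p^\star - p_t\|_2^2 \le \tfrac{2}{\alpha}\Delta_t + \tfrac{L^2}{\alpha^2}$. Substituting this together with $\|g_t\|_2 \le L$ into the previous display gives exactly $\Delta_{t+1} \le \eta\,\Delta_t + c'$, with $\eta = 1 - (1+\phi)(2 - \beta/\alpha)$ and $c' = \big(\tfrac{\phi}{2\beta} + \tfrac{(1+\phi)(\beta-\alpha)}{2\alpha^2}\big)L^2$; the hypothesis $\beta/\alpha \in (2 - \tfrac{1}{1+\phi},\,2)$ is precisely what forces $\eta \in (0,1)$.

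Finally I would unroll the recursion. Setting $c := c'/(1-\eta)$ (which matches the stated constant, since $1-\eta = (1+\phi)(2-\beta/\alpha)$), the recursion rewrites as $\Delta_{t+1} - c \le \eta(\Delta_t - c)$, hence $\Delta_T - c \le \eta^T(\Delta_0 - c)$. Requiring $\eta^T(\Delta_0 - c) \le \epsilon$ and taking logarithms---noting that $\log\eta < 0$ flips the inequality---gives the stated iteration count $T \ge \log_{\eta}\tfrac{\epsilon}{F[p_0(\cdot)] - F[p^\star(\cdot)] - c}$ and the guarantee $F[p_T(\cdot)] \le F[p^\star(\cdot)] + c + \epsilon$. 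The main obstacle I anticipate is justifying the radius bound $\|p_t - p^\star\|_2 \le L/(2\alpha)$ for \emph{every} iterate, not merely at initialization: because $\mathcal{D}_k$ is nonconvex, $p^\star$ is only a constrained minimizer, so the first-order optimality that would yield clean quadratic growth is unavailable, and the optimality defect must instead be absorbed through the Lipschitz bound on the cross term. One therefore has to maintain the radius bound inductively---using that $\Delta_t$ is non-increasing once $\Delta_0 \ge c$---and getting this induction to close without circularity is the delicate part of the argument.
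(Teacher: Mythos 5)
Your proposal is correct and follows essentially the same route as the paper's proof: your descent estimate obtained by cancellation at $\mu = 1/\beta$ is algebraically identical to the paper's completed square $\tfrac{\beta}{2}\left(\|p_{t+1}-q_{t+1}\|_2^2 - \|p_t - q_{t+1}\|_2^2\right)$, and the remaining steps --- comparing the $\phi$-approximate projection against $p^\star$, strong convexity at $p_t$ to bound the inner product, strong convexity at $p^\star$ with Cauchy--Schwarz and the Lipschitz bound to get $\|p_t - p^\star\|_2^2 \le \tfrac{2}{\alpha}\Delta_t + \tfrac{L^2}{\alpha^2}$, then unrolling the contraction with the same $\eta$, $c'$, and $c$ --- match the paper's proof line for line. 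The obstacle you flag, namely that the radius bound $\|p_t - p^\star\|_2 \le L/(2\alpha)$ is needed at every iterate but is only assumed at $t=0$, is genuine but is equally unaddressed in the paper, whose proof invokes that bound at each step $t$; your explicit acknowledgment of this gap is, if anything, more careful than the published argument.
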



\begin{center}
	\begin{minipage}{0.43\linewidth}
		\includegraphics[width=\linewidth]{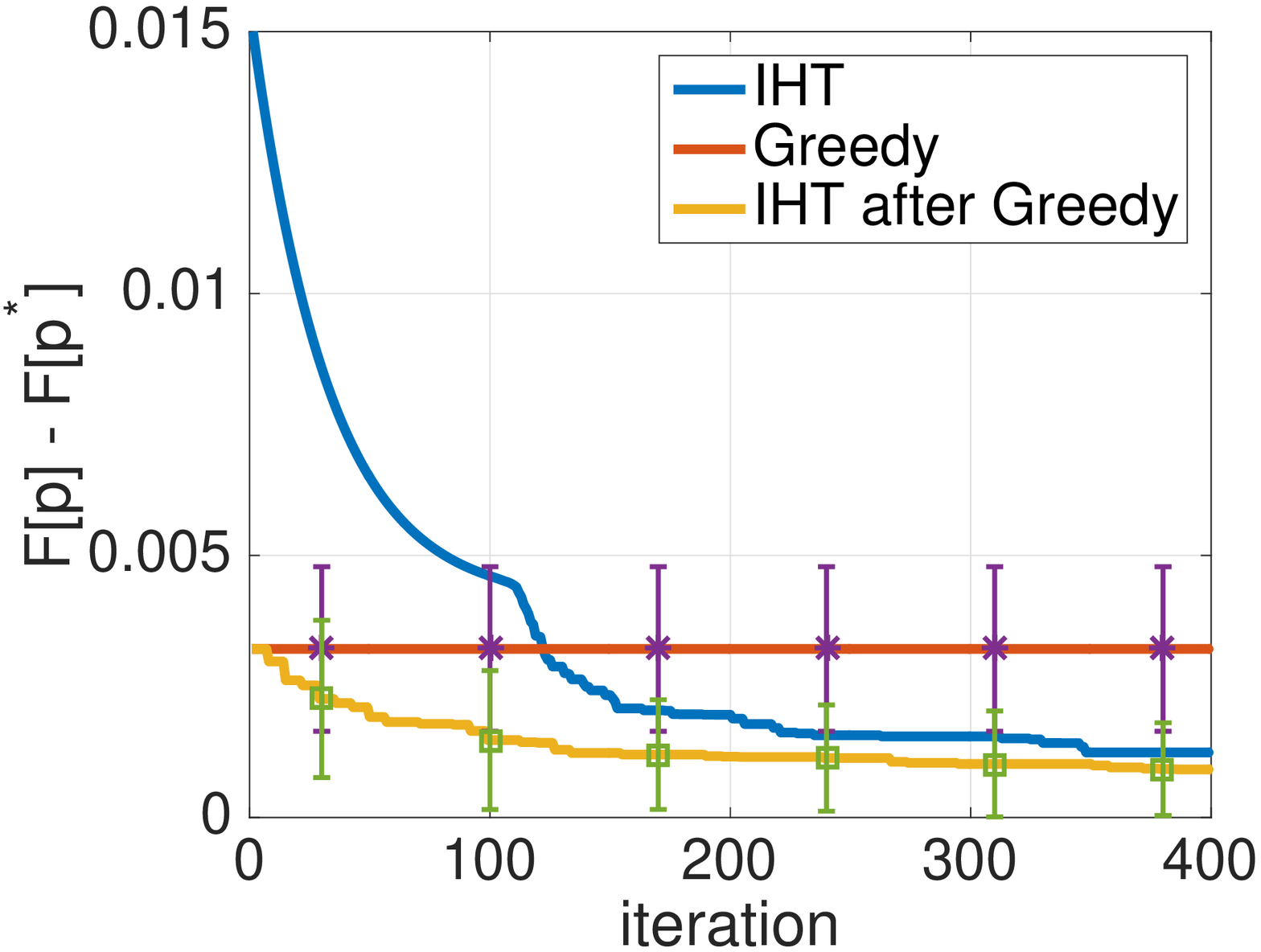}
		\captionof*{figure}{(a) Normalized $\ell_2$-norm Minimization}
	\end{minipage}%
	\begin{minipage}{0.43\linewidth}
		\includegraphics[width=\linewidth]{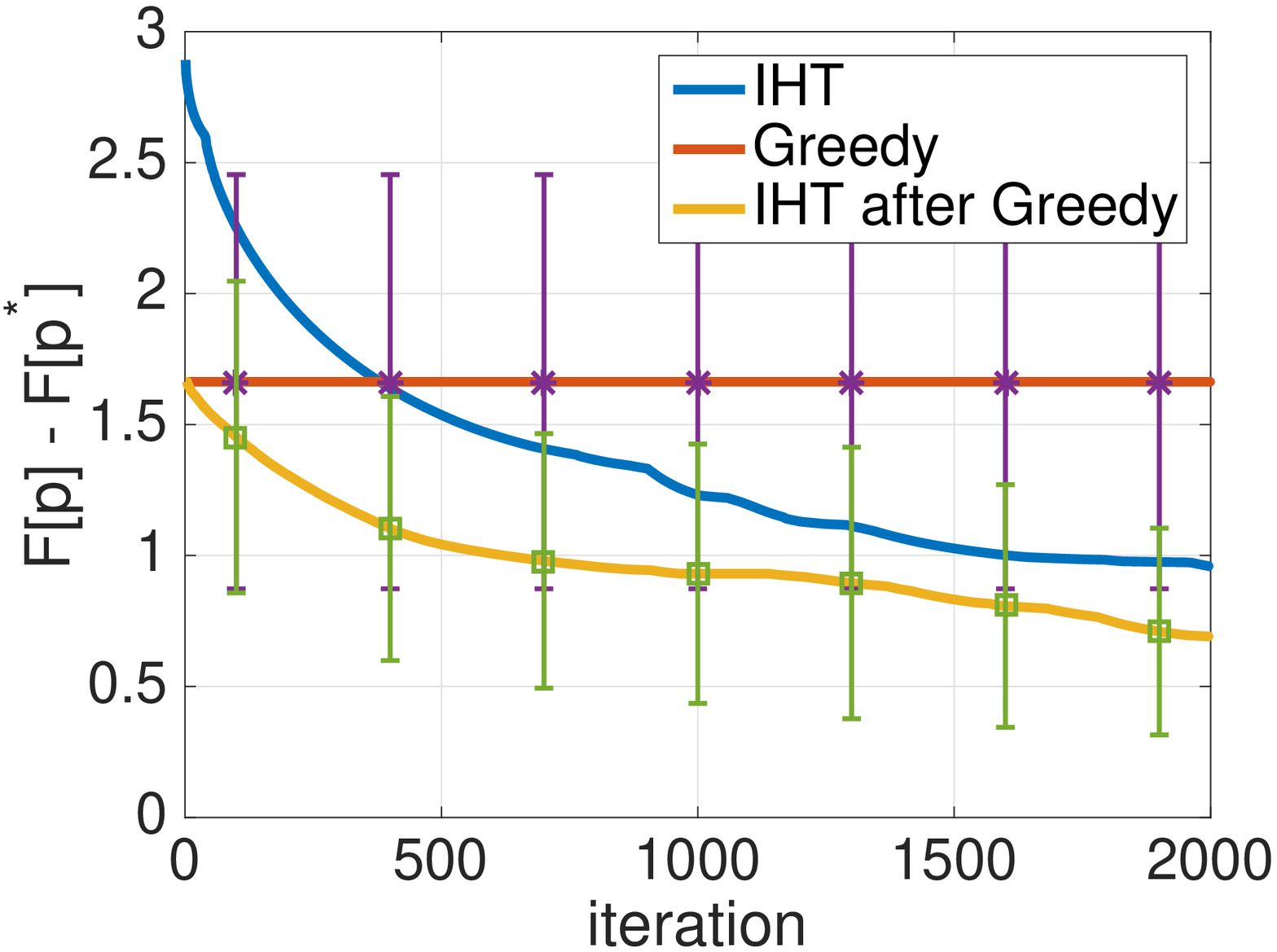}
		\captionof*{figure}{(b) Normalized KL Minimization}
	\end{minipage}
	\captionof{figure}{Simulated Experiments}\label{fig:simu}
\end{center}

\section{Experiments}

\begin{wrapfigure}[11]{r}{8cm}
	\begin{minipage}{0.5\textwidth}
		\vspace{-0.8cm}
		\begin{algorithm}[H]
			\begin{algorithmic}[1]
				\STATE \textbf{Input:} $F[\cdot]:\mathcal{P}\to \mathbb{R}$, $k \in \mathbb{Z}_+$.
				\textbf{Output:} $p_{T}\in \mathcal{D}_k$ 
				\STATE $\mathcal{S}:=\emptyset$
				\WHILE {$|\mathcal{S}|<k$}
				\STATE $j\in \argmin_{i\in [n]\backslash \mathcal{S}} \left\{ \min_{p\in \mathcal{P}_{\mathcal{S} \cup i} } F[p(\cdot)] \right\}$
				\STATE $\mathcal{S} :=\mathcal{S} \cup j$
				\ENDWHILE
				\RETURN $\arg \min_{p\in \mathcal{P}_{\mathcal{S}} } F[p(\cdot)]$
			\end{algorithmic}
			\caption{Greedy Selection}  \label{alg:greedy} 
		\end{algorithm}
	\end{minipage}
	\vspace{-0.3cm}
\end{wrapfigure}

We evaluate our algorithm on different convex objectives, namely, $\ell_2$-norm distance and KL divergence. As mentioned before, there are no theoretically guaranteed algorithms for $\ell_2$-norm distance minimization under sparsity constraint. 
To investigate optimality of the algorithms, we consider simulated experiments of sufficiently small size that the global optimal can be exhaustively enumerated.

{\bf IHT implementation details.}
For IHT, the step size is chosen by a simple strategy: given an initial step size, we double the step size when IHT is trapped in local optima, and return to the initial step size after escaping. We return the algorithm along the entire solution path.

{\bf Baseline: Forward Greedy Selection.}
Unfortunately, we are unaware of optimization algorithms for sparse probability estimation with general losses. As as a simple baseline, we consider greedy selection wrt. the objective. This is equivalent to Algorithm~\ref{alg:greedy}. For certain special cases e.g. KL objective, Algorithm~\ref{alg:greedy} can be applied efficiently and is effective in practice~\cite{koyejo2014prior}.



\subsection{Simulated Data}\label{subsec:simu}
We set dimension $n=15$, number of entries $m=2$, sparsity level $k=7$. That is, $\mathcal{X} = \{0,1\}^{15}$ is a 15-dimensional binary vector space, with cardinality $| \mathcal{X}| = 2^{15}=32768$. The distribution $p:\mathcal{X} \to [0,1]$ satisfies $\sum_{\bm{x}\in \mathcal{X}} p(\bm{x}) =1$. The sparsity constraint is designed to fix a support $\mathcal{S}:|\mathcal{S}|\leq 7$, such that for any $\bm{x}: p(\bm{x})>0$ has $\text{supp}(\bm{x})=\mathcal{S}$. Thus, the optimal solution is requires enumerating $\binom{15}{7} =6435$ possible supports.

The $\ell_2$-norm minimization objective is $F[p(\cdot)]= \|p(\cdot)-q(\cdot)\|_2^2$ where $q(\cdot)$ is a distribution generated by randomly choosing 50 positions $x_1\cdots x_{50} \in \mathcal{X}$ to assign random real numbers $c_1 \cdots c_{50}: \sum_{i=1}^{50} c_i=1$ and the other positions are assigned to 0, i.e., $q(x_i)=c_i$ for $i\in [50]$, and $q(\bm{x})=0$ otherwise. Initial step size $\mu=0.008$. Results are shown in Figure~\ref{fig:simu} (a). For the KL divergence objective, it is $F[p(\cdot)]=KL(p(\cdot)||q(\cdot))=\sum_{\bm{x}\in \mathcal{X}}p(\bm{x})\log \frac{p(\bm{x})}{q(\bm{x})}$, where $q(\cdot)$ is a random distribution generated similar to the $q(\cdot)$ in $\ell_2$-norm objective. The only difference is that $q(\bm{x})$ can not be zero as it would render the KL undefined. For simulated experiments, we use the optimum to normalize the objective function as $\tilde{F}[p] = F[p]- F[p^\star]$, so that at the optimum $\tilde{F}[p^\star] = 0$. 


%
Three algorithms are compared in each experiment, i.e., IHT, Greedy and IHT after Greedy. While IHT starts randomly, IHT after Greedy is initialized by the result of Greedy. 
In each run, the distribution $q(\cdot)$ and the starting distribution for IHT $p_{0}(\cdot)$ are randomly generated. Each of the experiments are run 20 times. Results are presented in showing the mean and standard deviation of Greedy and IHT after Greedy. The standard deviation of IHT is similar to that of IHT after Greedy.

We use the $\ell_2$-norm greedy projection in IHT in both experiments. Interestingly, this not only outperforms the $\ell_2$-norm greedy projection itself (Figure~\ref{fig:simu} (a)), but also outperforms Greedy on the KL objective (Figure~\ref{fig:simu} (b)), where \cite{koyejo2014prior} suggests provably good performance. In particularly, while the performance of Greedy can fluctuate severely, IHT (after Greedy) is stable in obtaining good results. Note that low variance is especially desirable when the algorithm is only applied a few times to save computation, as in large discrete optimization problems.

\begin{figure}[t!]
	\begin{center}
		\begin{minipage}{0.43\linewidth}
			\includegraphics[width=\linewidth]{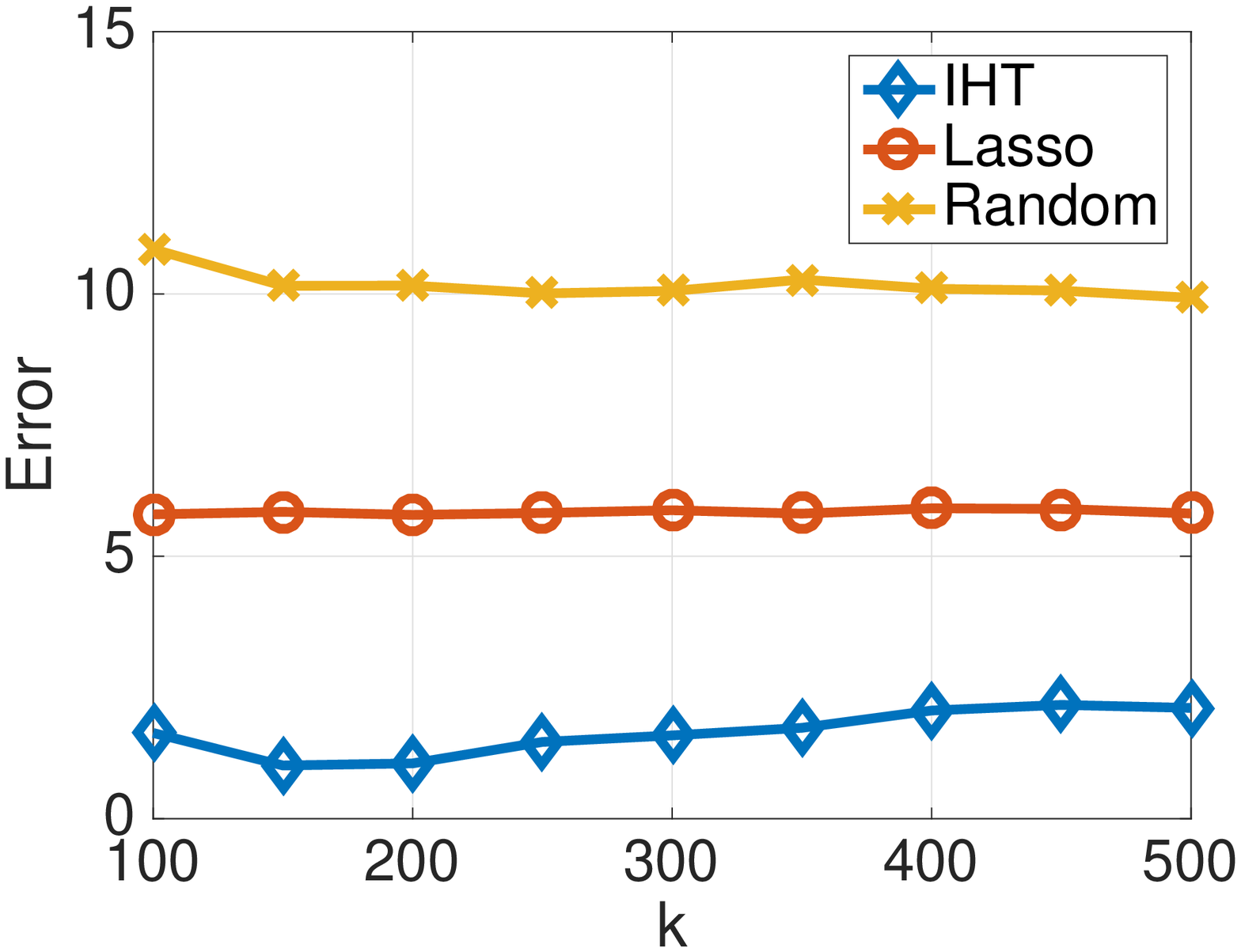}
			\captionof*{figure}{(a)}
		\end{minipage}
		\begin{minipage}{0.43\linewidth}
			\includegraphics[width=\linewidth]{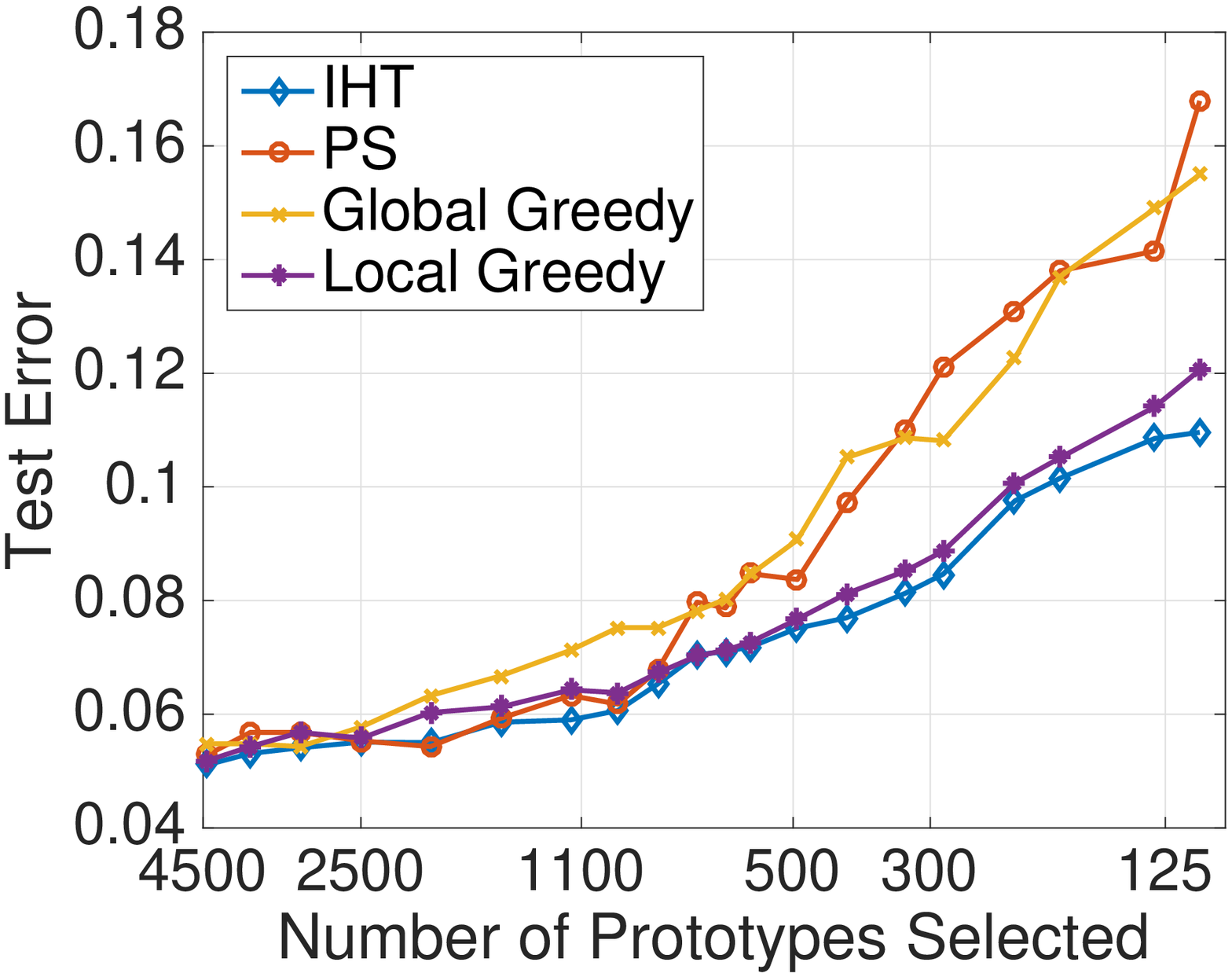}
			\captionof*{figure}{(b)}
		\end{minipage}%
		\captionof{figure}{ (a) Compression / Compressed sensing. Test Error at varying sparsity $k$. (b) Dataset Compression. Test Classification error of prototype nearest neighbor  classifier}\label{fig:real}
	\end{center}
	\vspace{-1.8em}
\end{figure}

\subsection{Benchmark Data}

{\bf Distribution Compression / Compressed sensing.} 
We apply our IHT to the task of expectation-preserving distribution compression, useful for efficiently storing large probability tables. Given a distribution $p(\cdot)$, our goal is to construct a sparse approximation $q(\cdot)$, such that $q(\cdot)$ approximately preserves expectations with respect to $p(\cdot)$. Interestingly, this model compression problem is equivalent to compressed sensing, but with the distributional constraints. Specifically, our goal is to find $\bm{q}$ which minimizes $||A\bm{q}-A\bm{p}||^2_2$ subject to a $k$-sparsity constraint on $\bm{q}$. The model is evaluated with respect to moment reconstruction $||B\bm{q}-B\bm{p}||^2_2$ for a new "sensing" matrix $B$. Our experiments use real data from the Texas hospital discharge public use dataset. IHT is compared to post-precessed Lasso and Random. Lasso ignores the simplex constraints during optimization, then projects the results to the simplex, while Random is a na\"{i}ve baseline of random distributions. Figure~\ref{fig:real}(a) shows that IHT significantly outperforms baselines. Additional details are provided in Appendix~\ref{app:experiments} due to limited space.

{\bf Dataset compression.}
We study representative prototype selection for the Digits data~\cite{Hull1994}. Prototypes are representative examples chosen from the data in order to achieve dataset compression. Our optimization objective is the Maximum Mean Discrepancy (MMD) between the discrete data distribution and the sparse data distribution representing the selected samples. We evaluate performance using the prototype nearest neighbor classification error on a test dataset. We compare two forward selection greedy variants (Local Greedy and Global Greedy) proposed by~\cite{Kim2016Criticism} and the means algorithm (labeled as PS) proposed by~\cite{bien2011}, both state of the art. The results are presented in Figure~\ref{fig:real}(b) showing that IHT outperforms all baselines. Additional experimental details are provided in Appendix~\ref{app:experiments} due to limited space.

\section{Conclusion and Future Work}
\label{sec:conclusion}

In this work, we proposed the use of IHT for learning discrete sparse distributions. We study several theoretical properties of the algorithm from an optimization viewpoint, and propose practical solutions to solve otherwise hard problems. There are several possible future directions of research. We have analyzed discrete distributions with sparsity constraints. The obvious extensions are to the space of continuous measures and structured sparsity constraints. Is there a bigger class of constraints for which the a tractable projection algorithm exists? Can we improve the sufficient conditions under which projections are provably close to the optimum projection? Finally, more in-depth empirical studies compared to other state of the art algorithms should be very interesting and useful to the community.
\bibliographystyle{unsrt}
\bibliography{distIHT}
\clearpage
\appendix

\section{Vector-Sparsity for Distributions} 
\label{app:aligned}
While our framework is developed for sparsity along the dimensions of a multivariate discrete distribution, it is easily extended to alternative notions of sparsity. One common setting is where we are interested in sparsity of the distribution $p(\cdot)$ when represented as a vector $\mathbf{p}$ e.g. sparsifying the number of valid states of a univariate distribution such as a histogram. In our setting, this can be solved by constructing a binary vector $Z \in \mathcal{Z} = \{0, 1\}^{|\mathcal{X}|}$, where each dimension of $Z$ indexes one of the possible states of $X$. As each state of $X$ is associated with an element of the vector $\mathbf{p}$, state restrictions imply vector sparsity of $\mathbf{p}$. For example $\mathbf{z} = [1, 1, \ldots, 1, 0]$ implies that $X$ can take all states apart from the last one, $\mathbf{z} = [1, 1, 0, \ldots, 0]$ implies that only the first two states are valid, and so on. Setting $\mathbb{P}(Z = \mathbf{z}) \propto \mathbb{P}(X \in \{\text{states indexed by }\mathbf{z}\}) \propto \mathbf{z}^{\top}\mathbf{p}$ completes the transformation. In summary, dimension-wise sparsity of $\mathbf{z}$ corresponds to restrictons on the support of $X$, which equivalently corresponds to sparsification of the vector probability $\mathbf{p}$. 

\section{Connection between Variational Derivative and Gradient}

As random variable has discrete space $\mathcal{X}$, we can use a vector to store probability of every $\bm{x} \in \mathcal{X}$ of a distribution $q(\cdot):~\mathcal{X}\to [0,1]$. That is, the vector serves as an oracle of $q(\cdot)$. We denote the vector as $\widehat{\bm{q}} \in [0,1]^{|\mathcal{X}|}$ to distinguish it from the original $q(\cdot) \in \mathcal{P}$. 

We define a bijection map $\Phi(\cdot)~:\mathcal{X}\to [|\mathcal{X}|]$, and let $\forall \bm{x}\in \mathcal{X}$, we have $q(\bm{x})=\widehat{q}_{\Phi(\bm{x})}$, where $\widehat{q}_{\Phi(\bm{x})}$ is the ${\Phi(\bm{x})}^{th}$ entry of vector $\widehat{\bm{q}}$. In this case, we may consider $\widehat{F}(\widehat{\bm{q}})$ as a function in vector space, \emph{i.e.}, $\widehat{F}(\cdot):[0,1]^{|\mathcal{X}|}\to \mathbb{R}$, and $F[q]=\widehat{F}(\widehat{\bm{q}})$. That is, we have re-represent the original functional $F:\mathcal{P} \to \mathbb{R}$ as a function $\widehat{F}:[0,1]^{|\mathcal{X}|}\to \mathbb{R}$. The function $\widehat{F}(\cdot)$ naturally has its gradient.

\begin{definition}\label{def:gradient}
	\textbf{Gradient of $\widehat{F}(\cdot)$.} The gradient of $\widehat{F}(\cdot):[0,1]^{|\mathcal{X}|}\to \mathbb{R}$ is
	\begin{equation}
	\nabla \widehat{F}(\widehat{\bm{q}}) = \left[ \tfrac{\partial \widehat{F}}{\partial \widehat{q}_1}, \cdots ,\tfrac{\partial \widehat{F}}{\partial \widehat{q}_{|\mathcal{X}|}} \right]^\top
	\end{equation}
\end{definition} 

We next show that no matter which bijection map is chosen, the gradient of $\widehat{F}(\cdot)$, \emph{i.e.}, Definition~\ref{def:gradient}. and the variational derivative of $F[\cdot]$, \emph{i.e.}, Definition~\ref{def:derivative}, are equivalent.

\begin{theorem}\label{thm:var_grad}
	In the case that $\mathcal{X}$ is discrete, definition \ref{def:derivative} is equivalent to definition \ref{def:gradient} given any bijection $\Phi(\cdot):\mathcal{X}\to [|\mathcal{X}|]$, \emph{i.e.},
	\[\tfrac{\delta F}{\delta q}(\bm{x})=\nabla \widehat{F}(\widehat{\bm{q}})_{\Phi(\bm{x})}\]
	for any $\bm{x}\in \mathcal{X}$, where $\nabla \widehat{F}(\widehat{\bm{q}})_{\Phi(\bm{x})}$ is the $\Phi(\bm{x})^{th}$ entry of gradient vector $\nabla \widehat{F}(\widehat{\bm{q}})$.
\end{theorem}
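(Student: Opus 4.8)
The plan is to recognize that both sides of the claimed identity are merely two different encodings of the same Gâteaux (directional) derivative of $F$, and then to exploit the arbitrariness of the test function $\phi$ to pass from an integrated identity to a pointwise one.

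First I would fix an arbitrary function $\phi:\mathcal{X}\to\mathbb{R}$ and let $\widehat{\bm{\phi}}\in\mathbb{R}^{|\mathcal{X}|}$ be its vector encoding under the same bijection $\Phi$, i.e. $\widehat{\phi}_{\Phi(\bm{x})}=\phi(\bm{x})$. Since $q(\bm{x})=\widehat{q}_{\Phi(\bm{x})}$ for all $\bm{x}$, the perturbed distribution $q+\epsilon\phi$ is encoded by the vector $\widehat{\bm{q}}+\epsilon\widehat{\bm{\phi}}$, and by the defining relation $F[q]=\widehat{F}(\widehat{\bm{q}})$ we get $F[q+\epsilon\phi]=\widehat{F}(\widehat{\bm{q}}+\epsilon\widehat{\bm{\phi}})$.

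Next I would differentiate both sides in $\epsilon$ at $\epsilon=0$. On the vector side the ordinary multivariate chain rule gives $\tfrac{\partial}{\partial\epsilon}\widehat{F}(\widehat{\bm{q}}+\epsilon\widehat{\bm{\phi}})\big|_{\epsilon=0}=\langle\nabla\widehat{F}(\widehat{\bm{q}}),\widehat{\bm{\phi}}\rangle=\sum_{i=1}^{|\mathcal{X}|}\tfrac{\partial\widehat{F}}{\partial\widehat{q}_i}\widehat{\phi}_i$. Re-indexing this finite sum over $\mathcal{X}$ via the bijection (writing $i=\Phi(\bm{x})$ and $\widehat{\phi}_{\Phi(\bm{x})}=\phi(\bm{x})$) turns it into $\sum_{\bm{x}\in\mathcal{X}}\nabla\widehat{F}(\widehat{\bm{q}})_{\Phi(\bm{x})}\,\phi(\bm{x})$. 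Comparing this with Definition~\ref{def:derivative}, which asserts that the same quantity equals $\sum_{\bm{x}}\tfrac{\delta F}{\delta q}(\bm{x})\,\phi(\bm{x})$, I obtain $\sum_{\bm{x}}\big(\tfrac{\delta F}{\delta q}(\bm{x})-\nabla\widehat{F}(\widehat{\bm{q}})_{\Phi(\bm{x})}\big)\phi(\bm{x})=0$ for every admissible $\phi$. Then, using that $\phi$ is arbitrary, I would take $\phi$ to be the indicator of a single point $\bm{x}_0$ (that is, $\phi(\bm{x})=1$ if $\bm{x}=\bm{x}_0$ and $0$ otherwise), which collapses the sum to its single term and yields $\tfrac{\delta F}{\delta q}(\bm{x}_0)=\nabla\widehat{F}(\widehat{\bm{q}})_{\Phi(\bm{x}_0)}$. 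Since $\bm{x}_0$ is arbitrary this is exactly the claim, and since $\Phi$ enters only as a relabeling of coordinates, the conclusion is independent of the chosen bijection.

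The main obstacle is really just the chain-rule step: ensuring the directional derivative of $\widehat{F}$ exists and equals the inner product with $\nabla\widehat{F}$, which presumes $\widehat{F}$ is differentiable at $\widehat{\bm{q}}$. I would also flag a subtlety the definitions gloss over, namely that the encoded vectors live on the probability simplex rather than all of $[0,1]^{|\mathcal{X}|}$, so both the partial derivatives and the perturbation $\widehat{\bm{q}}+\epsilon\widehat{\bm{\phi}}$ require reading $\widehat{F}$ as extended to a neighborhood in the ambient space. With that reading the argument is entirely elementary, and the finiteness of $\mathcal{X}$ is precisely what keeps every sum finite and legitimizes the indicator-function trick for extracting pointwise equality with no measure-theoretic complications.
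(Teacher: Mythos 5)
Your proof is correct and takes essentially the same route as the paper: encode $\phi$ as a vector via the bijection $\Phi$, identify the directional derivative $\tfrac{d}{d\epsilon}F[q+\epsilon\phi]\big|_{\epsilon=0}$ with the inner product $\langle \nabla\widehat{F}(\widehat{\bm{q}}), \widehat{\bm{\phi}}\rangle$ by the multivariate chain rule, and use the arbitrariness of $\phi$ to match the two definitions. If anything you are slightly more careful than the paper, which passes from the integrated identity to the pointwise one without comment, whereas you make that step explicit by testing against indicators of single points, and you rightly flag the implicit assumptions that $\widehat{F}$ is differentiable at $\widehat{\bm{q}}$ and extends to a neighborhood of the simplex.
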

\begin{proof}
	First we define an operator $\texttt{vec}(\cdot)$ from function space over $\mathcal{X}$ to vector space $\mathbb{R}^{|\mathcal{X}|}$, so that for a function $f(\cdot):\mathcal{X}\to \mathbb{R}$ and every $\bm{x}\in \mathcal{X}$, we have $f(\bm{x})=\texttt{vec}(f)_{\Phi(\bm{x})}$. 
	That is, by using $\texttt{vec}(\cdot)$, we store every information of $f(\cdot)$ into a vector.
	Therefore, by substituting $\tfrac{\delta F}{\delta q}(\bm{x})$ by $\nabla \widehat{F}(\widehat{\bm{q}})_{\Phi(\bm{x})}$ in definition \ref{def:derivative}, we have:
	\begin{align}
	\sum_{\bm{x}\in\mathcal{X}} \nabla \widehat{F}(\widehat{\bm{q}})_{\Phi(\bm{x})}  \phi &= \texttt{vec}(\phi)^\top \nabla \widehat{F}(\widehat{\bm{q}}) 
	=\lim\limits_{\epsilon\to 0} \tfrac{\widehat{F}(\widehat{\bm{q}}+\epsilon\texttt{vec}(\phi))-\widehat{F}(\widehat{\bm{q}})}{\epsilon} \\
	&=\left[ \frac{d}{d\epsilon}\hat{F}(\bm{\hat{q}}+\epsilon\texttt{vec}(\phi)) \right]_{\epsilon=0} 
	=\left[ \frac{d}{d\epsilon}\hat{F}(\texttt{vec}(q+\epsilon\phi)) \right]_{\epsilon=0} 
	=\left[ \frac{dF[q+\epsilon \phi]}{d\epsilon} \right]_{\epsilon=0}
	\end{align}
	This shows that $\frac{\delta F}{\delta q}=\texttt{vec}^{-1}(\nabla \hat{F}(\bm{\hat{q}})_{\Phi(\bm{x})})$, \emph{i.e.}, $\tfrac{\delta F}{\delta q}(\bm{x})=\nabla \widehat{F}(\widehat{\bm{q}})_{\Phi(\bm{x})}$ for any $\bm{x}\in \mathcal{X}$, where the inverse of $\texttt{vec}(\cdot)$ exists because that the $\Phi(\cdot)$ is bijection.
\end{proof}

Though they are equivalent in discrete settings, we can see that definition \ref{def:derivative} is more general than definition \ref{def:gradient}, as the variational derivative can be easily extended to continuous $\mathcal{X}$. 
Even when $\mathcal{X}$ is discrete, it can also be used when $q$ is given by a function, with no need to store everything in a vector. 

\section{Discussion on Other Projection Heuristics}\label{app:discuss_proj}

One may ponder how may other heuristics perform when dealing with the $\ell_2$-norm sparse projection. For example, a two-stage thresholding approach, i.e. $i)$ running gradient descent to convergence, and then $ii)$ projection. However, it is known to be sub-optimal even for simple problems, such as least-squares with sparsity constraints. In fact, the results when using such approach on $\ell_2$-norm minimization are the same as the Greedy baseline: It $i)$ converges to the global optimum $q(\cdot)$, and then $ii)$ use greedy projection to try to minimize $F[p(\cdot)] = \|p(\cdot) - q(\cdot)\|_2^2$ subject to sparsity constraint, which has been shown to be inferior than IHT (subsection~\ref{subsec:simu}).

One may also come up with the idea of choosing the $k$ ``heaviest'' coordinates as support. However, in the general case, taking the $k$-heaviest coordinates of $q$ (without assuming any structure) would result into a non-valid putative solution; recall, by definition of the discrete setting, we have $n$ coordinates, each of which takes $m$ points, leading to a $m^n$ sample space. Simply taking the $k$-heaviest coordinates of that long vector would result into an intermediate representation of non-zero positions that does not correspond to a probability distribution. A variation of this approach is fine for the ``vector-sparsity'' special case.

\section{Proof of Theorem~\ref{thm:hardness}}
Here, we show that the subset selection problem can be reduced to the sparse $l_2$ distribution $l_2$-norm projection problem (\ref{eq:proj_onto_Dk}).
Let us first define the subset sum problem, or SSP.		
\begin{definition}[Subset Sum Problem~\cite{kleinberg2006algorithm}]
Given a ground set of integers, $\mathcal{G} \subset \{\mathbb{Z}\}^n$, in the Subset Sum Problem we look for a non-empty subset $\mathcal{S} \subseteq \mathcal{G}$, such that the sum of all elements in $\mathcal{S}$ is zero. 
This is an NP-complete problem.
\end{definition}
		
Consider a Subset Sum Problem instance with a ground set $\mathcal{G}$, where $|\mathcal{G}|=n$. 
Let us denote its elements as $e_1, \dots, e_n$. 
We reformulate the sparse distribution $\ell_2$-norm projection problem as follows.
Let $\mathcal{X}$ be $n$-dimensional binary space, \emph{i.e.}, $\mathcal{X}=\{0,1\}^n$. 
For $\bm{x}\in\mathcal{X}$, let its positive positions denote a subset, \emph{i.e.}, $\mathcal{G}_{\bm{x}} = \{e_i\mid \bm{x}_i=1 \}$.  
Define an $n$-dimensional function $q_k:\mathcal{X} \to \mathbb{R}$ as
\begin{align}
	q_k(\bm{x})=\left\{
	\begin{array}{ll}
	1,   \quad & \text{if} \quad \sum_{e\in \mathcal{G}_{\bm{x}}} e = 0 \quad \text{and} \quad |\mathcal{G}_{\bm{x}}|=k, \\
	0,   \quad & \text{otherwise},
	\end{array}
	\right.
\end{align}
where $k$ is a parameter.
	
Then, we try to find its projection to $\mathcal{D}_k$ from $k=1$ to $k=n$. Denote $\widehat{p}_k(\cdot)$ as the optimal $\ell_2$-norm projection of $q_k(\cdot)$ to $k$-sparse distribution set $\mathcal{D}_k$. 
We can see that, if there is no subset $\mathcal{G}_{\bm{x}}$ with size $k$ that sum up to zero, then $q_k(\cdot)$ is zero everywhere. 
Denote the support of the optimal projection $\widehat{p}_k(\cdot)$ as $\mathcal{S}^\star$, and we can see that $\widehat{p}_k(\bm{x})=\frac{1}{2^k}$ for every $\text{supp}(\bm{x})\subseteq \mathcal{S}^\star$. That is, $\widehat{p}_k(\bm{0})=\frac{1}{2^k}$, since $supp(\bm{0})=\emptyset \subseteq \mathcal{S}^\star$.
	
If there exist a subset $\mathcal{G}_{\bm{x}'}$ with size $k$ that sum up to zero, we can see that $\widehat{p}_k(\bm{x})=1$ when $\bm{x}=\bm{x}'$ and $\widehat{p}_k(\bm{x})=0$ elsewhere. 
Therefore, noting that $|supp(\bm{x}')|=k$, we can check whether $\widehat{p}_k$  --\emph{i.e.}, the optimal $\ell_2$-norm projection of $q_k(\cdot)$ to $\mathcal{D}_k$-- has  $\widehat{p}_k(\bm{0})=\frac{1}{2^k}$, to know that whether there exist a subset $\mathcal{G}_{\bm{x}} \subseteq \mathcal{G}$ with size $k$ summing up to zero.
	
If there exist a polynomial algorithm solving the projection problem in $O(\text{poly}(n))$, then we can run it $O(n)$ times to try from $k=1$ to $k=n$, to solve the Subset Sum Problem.
Since the Subset Sum Problem is NP-hard, hence the NP-hardness of the sparse distribution $\ell_2$-norm projection problem.

\section{Proof of Theorem~\ref{thm:no_constantfactor_approx}}
We prove the theorem by showing that, for any algorithm we can design an example where the algorithm fails. Note that in Algorithm~\ref{alg:iht} the input of projection step is not necessarily a distribution. That is, we have to consider the input of the projection problem (\ref{eq:proj_onto_Dk}) a general function.
Let $\mathcal{X}$ be $n$-dimensional binary space, \emph{i.e.}, $\mathcal{X}=\{0,1\}^n$. 
Denote an always-zero function $q_0:\mathcal{X} \to 0$. 
Given an deterministic algorithm $f$, it takes in a function $q:\mathcal{X} \to \mathbb{R}$ and output a distribution $\widehat{p}(\cdot)$ with support $\mathcal{S}$, where $|\mathcal{S}|=k$.
Assume the algorithm $f$ evaluates $T=O(poly(n))$ positions, denoting as $\bm{x}_1, \bm{x}_2, \dots, \bm{x}_T$. 
Note that $T$ can be much less than ${n\choose k}$, as ${n\choose k}$ cannot be upper bound by  any $n^c$ where $c$ is a constant.
Therefore, there exist an $\bm{x}^\star$ as 
\begin{align}
	\bm{x}^\star \in \mathcal{X}\backslash \{\bm{x}_1, \dots, \bm{x}_T \} \quad \text{s.t.} \quad \text{supp}(\bm{x})\neq \mathcal{S} \quad and \quad |\text{supp}(\bm{x})|=k
\end{align}
	
Now we construct an $n$-dimensional function $q:\mathcal{X}\to \mathbb{R}$ as 
\begin{align}
	q(\bm{x})=\left\{
	\begin{array}{ll}
	1+\delta,   \quad & \text{if} \quad \bm{x}=\bm{x}^\star \\
	0,		\quad & \text{otherwise}
	\end{array},
	\right.
\end{align}
where $\delta>0$. We input the constructed $q$ to the deterministic algorithm $f$. Note that the value of positions it evaluates do not have any differences compared to those when $q_0$ is inputed, \emph{i.e.}, $q(\bm{x}_i)=q_0(\bm{x}_i)=0$ for every $i\in [T]$. As $f$ is deterministic, the output solution is still $\widehat{p}$ with support $\mathcal{S}$. As a result,  $q(\bm{x})=0$ for every $\bm{x}\in \mathcal{X}_{\mathcal{S}}$. Denoting $\widehat{p}_\mathcal{S}$ as the optimal $\ell_2$-norm projection of $q$ to $\mathcal{P}_\mathcal{S}$, we have $\| q(\cdot) - \widehat{p}(\cdot) \|^2_2 \geq \|q(\cdot) - \widehat{p}_\mathcal{S} \|^2_2 = 1/|\mathcal{X}_\mathcal{S}|+(1+\delta)^2$. 
	
Noting that the optimal projection is
\begin{align}
	p^\star(\bm{x})=\left\{
	\begin{array}{ll}
	1,   \quad & \text{if} \quad \bm{x}=\bm{x}^\star \\
	0,		\quad & \text{otherwise}
	\end{array},
	\right.
\end{align}
we can see the optimal $\ell_2$-norm distance is $\|q(\cdot) -p^\star(\cdot)\|^2_2=\delta^2$. 
Therefore, the approximation rate of algorithm $f$ on this input $q$ is
\begin{align}
\varphi = \frac{\|q-\widehat{p}\|^2_2}{\|q-p^\star\|^2_2}-1 \geq \frac{\frac{1}{|\mathcal{X}_{\mathcal{S}}|}+(1+\delta)^2}{\delta^2}-1  \geq \frac{\frac{1}{|\mathcal{X}_{\mathcal{S}}|}+1}{\delta^2}-1
\end{align}
As $\delta$ can be arbitrarily close to 0, the approximation ratio $\varphi$ can not be upper bounded.


\section{Proof of Theorem~\ref{thm:greedy}}
\begin{proof}
	First, we quantify the influence of the gradient step. For any support $
	\mathcal{S}\subset [n]$, let $\hat{q}_{\mathcal{S}}$ be the optimal projection of $q=p-\mu \frac{\delta F}{\delta p}$ to sparsity domain $\mathcal{P}_{\mathcal{S}}$, and let $\hat{p}_{\mathcal{S}}$ be the optimal projection of $p$ to sparsity domain $\mathcal{P}_{\mathcal{S}}$. Then, we have
	\begin{align}
	\|\hat{q}_{\mathcal{S}} - q\|_2 = \left\|\hat{q}_{\mathcal{S}} - p+\mu \tfrac{\delta F}{\delta p} \right\|_2 \geq \|\hat{q}_{\mathcal{S}} - p \|_2 - \mu L \geq  \|\hat{p}_{\mathcal{S}} - p\|_2 - \mu L 
	\end{align}
	Its upper bound is
	\[ \|\hat{q}_{\mathcal{S}} - q\|_2 \leq \|\hat{p}_{\mathcal{S}} - q\|_2 = \left\|\hat{p}_{\mathcal{S}} - p+\mu \tfrac{\delta F}{\delta p}\right\|_2 \leq \|\hat{p}_{\mathcal{S}} - p \|_2 + \mu L \]
	That is,
	\begin{align}\label{eq:thm3_eq1}
	\|\hat{p}_{\mathcal{S}} - p \|_2 + \mu L \geq \|\hat{q}_{\mathcal{S}} - q\|_2 \geq \|\hat{p}_{\mathcal{S}} - p \|_2 - \mu L
	\end{align} 
	
	Nest, consider a support $\mathcal{S}\subset \mathcal{S}'$, where $\mathcal{S}'=supp(p)$. We use the greedy procedure to add one element $e\in [n]\backslash \mathcal{S}$ to $\mathcal{S}$. It is to find
	\[ e\in \arg\min_{i\in [n]\backslash \mathcal{S}}  \|\hat{q}_{\mathcal{S}\cup i} - q \|_2\]

	Define $\theta$ as a parameter describing how much better if we choose support  $\mathcal{S}\subset \mathcal{S}'$ to project than choosing other supports.
	\[ \theta = \min_{\mathcal{S}:\mathcal{S}\subset \mathcal{S}'} \left(\min_{i\in [n]\backslash \mathcal{S}', j\in \mathcal{S}' \backslash \mathcal{S}} \|\hat{q}_{\mathcal{S}\cup i} - q \|_2 -\|\hat{q}_{\mathcal{S}\cup j} - q \|_2 \right)\]
	As we can see, the greater $\theta$ is, the more possible for the greedy method to finally find $\mathcal{S}'$. Moreover, if $\theta>0$, then the greedy procedure finds exactly $\mathcal{S}'$.  By using inequality~\eqref{eq:thm3_eq1}, we have
	\[ \theta > \min_{\mathcal{S}:\mathcal{S}\subset \mathcal{S}'} \left(\min_{i\in [n]\backslash \mathcal{S}', j\in \mathcal{S}' \backslash \mathcal{S}} \|\hat{p}_{\mathcal{S}\cup i} - p \|_2 -\|\hat{p}_{\mathcal{S}\cup j} - p \|_2 \right)-2\mu L\]
	
	Therefore, if we have
	\begin{align}\label{eq:thm3_eq2}
		2\mu L < \min_{\mathcal{S}:\mathcal{S}\subset \mathcal{S}'} \left(\min_{i\in [n]\backslash \mathcal{S}', j\in \mathcal{S}' \backslash \mathcal{S}} \|\hat{p}_{\mathcal{S}\cup i} - p \|_2 -\|\hat{p}_{\mathcal{S}\cup j} - p \|_2 \right),
	\end{align}
	we can guarantee $\theta>0$, which means the greedy method finds exactly $\mathcal{S}'$. 
	
	Next, we analyze when is inequality~\eqref{eq:thm3_eq2} achievable for enough small step size $\mu>0$, \emph{i.e.}, $\|\hat{p}_{\mathcal{S}\cup i} - p \|_2 -\|\hat{p}_{\mathcal{S}\cup j} - p \|_2>0$ in inequality~\eqref{eq:thm3_eq2}.
	
	First, let us calculate $\|\hat{p}_{\mathcal{S}\cup i} - p \|_2^2$ for any $i\in [n]\backslash \mathcal{S}'$.
	\begin{align}
		\|\hat{p}_{\mathcal{S}\cup i} - p \|_2^2 &=  \sum_{supp(\bm{x})\leq |\mathcal{S}\cup i|} (\hat{p}_{\mathcal{S}\cup i}(\bm{x}) - p(\bm{x}) )^2 + \sum_{supp(\bm{x})> |\mathcal{S}\cup i|} (\hat{p}_{\mathcal{S}\cup i}(\bm{x}) - p(\bm{x}) )^2 \\
									  &=  \sum_{supp(\bm{x})\leq |\mathcal{S}\cup i|} (\hat{p}_{\mathcal{S}\cup i}(\bm{x}) - p(\bm{x}) )^2 + \sum_{supp(\bm{x})> |\mathcal{S}\cup i|}  p(\bm{x})^2 \\
									  &=  \sum_{\bm{x}\in \mathcal{X}_{\mathcal{S}\cup i}} (\hat{p}_{\mathcal{S}\cup i}(\bm{x}) - p(\bm{x}) )^2 + \sum_{supp(\bm{x})\leq |\mathcal{S}\cup i|, \bm{x}\notin \mathcal{X}_{\mathcal{S}\cup i}} (\hat{p}_{\mathcal{S}\cup i}(\bm{x}) - p(\bm{x}) )^2 \\ 
									  &\quad \quad \quad + \sum_{supp(\bm{x})> |\mathcal{S}\cup i|}  p(\bm{x})^2 \\
									  &=  \sum_{\bm{x}\in \mathcal{X}_{\mathcal{S}\cup i}} (\hat{p}_{\mathcal{S}\cup i}(\bm{x}) - p(\bm{x}) )^2 + \sum_{supp(\bm{x})\leq |\mathcal{S}\cup i|, \bm{x}\notin \mathcal{X}_{\mathcal{S}\cup i}} p(\bm{x})^2 + \sum_{supp(\bm{x})> |\mathcal{S}\cup i|}  p(\bm{x})^2 \\
									  &=  \sum_{\bm{x}\in \mathcal{X}_{\mathcal{S}\cup i}} (\hat{p}_{\mathcal{S}\cup i}(\bm{x}) - p(\bm{x}) )^2 - \sum_{ \bm{x}\in \mathcal{X}_{\mathcal{S}\cup i}} p(\bm{x})^2 + \sum_{\bm{x}\in \mathcal{X}}  p(\bm{x})^2 \label{eq:thm3_eq3}	
	\end{align}
	
	Noting that $p:\mathcal{X}\to \mathbb{R}_+$ is a distribution, we can explicitly find $\hat{p}_{\mathcal{S}\cup i}$, as shown in the main text, then
	\[\sum_{\bm{x}\in \mathcal{X}_{\mathcal{S}\cup i}} (\hat{p}_{\mathcal{S}\cup i}(\bm{x}) - p(\bm{x}) )^2 = \frac{(1-\sum_{ \bm{x}\in \mathcal{X}_{\mathcal{S}\cup i}} p(\bm{x}))^2}{|\mathcal{X}_{\mathcal{S}\cup i}|}\]
	Substituting it into equation~\eqref{eq:thm3_eq3}, we have
	\begin{align}\label{eq:thm3_eq4}
		\|\hat{p}_{\mathcal{S}\cup i} - p \|_2^2 = \frac{(1-\sum_{ \bm{x}\in \mathcal{X}_{\mathcal{S}\cup i}} p(\bm{x}))^2}{|\mathcal{X}_{\mathcal{S}\cup i}|} -\sum_{\bm{x}\in \mathcal{X}_{\mathcal{S}\cup i}} p(\bm{x})^2 + \sum_{\bm{x}\in \mathcal{X}}  p(\bm{x})^2
	\end{align}
	
	We can see that the derivation of equation~\eqref{eq:thm3_eq4} also holds for $j\in \mathcal{S}'\backslash \mathcal{S}$, which means
	\[\|\hat{p}_{\mathcal{S}\cup j} - p \|_2^2 = \frac{(1-\sum_{ \bm{x}\in \mathcal{X}_{\mathcal{S}\cup j}} p(\bm{x}))^2}{|\mathcal{X}_{\mathcal{S}\cup j}|} -\sum_{\bm{x}\in \mathcal{X}_{\mathcal{S}\cup j}} p(\bm{x})^2 + \sum_{\bm{x}\in \mathcal{X}}  p(\bm{x})^2  \]
	
	In our discrete setting, we have $|\mathcal{X}_{\mathcal{S}\cup i}|=|\mathcal{X}_{\mathcal{S}\cup j}|$. 
	
	Therefore, 
	\begin{align}
		&\|\hat{p}_{\mathcal{S}\cup i} - p \|_2^2 -  \|\hat{p}_{\mathcal{S}\cup j} - p \|_2^2 \\		
			&= \frac{(1-\sum_{ \bm{x}\in \mathcal{X}_{\mathcal{S}\cup i}} p(\bm{x}))^2 - (1-\sum_{ \bm{x}\in \mathcal{X}_{\mathcal{S}\cup j}} p(\bm{x}))^2}{|\mathcal{X}_{\mathcal{S}\cup j}|} + \left( \sum_{\bm{x}\in \mathcal{X}_{\mathcal{S}\cup j}} p(\bm{x})^2 - \sum_{\bm{x}\in \mathcal{X}_{\mathcal{S}\cup i}} p(\bm{x})^2 \right) \\
			&= \frac{\left(2-\sum_{ \bm{x}\in \mathcal{X}_{\mathcal{S}\cup i}} p(\bm{x}) -\sum_{ \bm{x}\in \mathcal{X}_{\mathcal{S}\cup j}} p(\bm{x})\right)\left(\sum_{ \bm{x}\in \mathcal{X}_{\mathcal{S}\cup j}} p(\bm{x}) - \sum_{ \bm{x}\in \mathcal{X}_{\mathcal{S}\cup i}} p(\bm{x})\right)}{|\mathcal{X}_{\mathcal{S}\cup j}|} \\
			&\quad \quad+ \left( \sum_{\bm{x}\in \mathcal{X}_{\mathcal{S}\cup j}} p(\bm{x})^2 - \sum_{\bm{x}\in \mathcal{X}_{\mathcal{S}\cup i}} p(\bm{x})^2 \right) 
	\end{align}
	Noting that $p$ is a $k$-sparse distribution with support $\mathcal{S}'$, we can see that for $i\in [n]\backslash \mathcal{S}'$, $\{\mathcal{X}_{\mathcal{S}\cup i}\backslash \mathcal{X}_{\mathcal{S}} \} \cap \mathcal{X}_{\mathcal{S}'} = \emptyset$. Therefore,  $p(\bm{x})=0$ where $\bm{x}\in \mathcal{X}_{\mathcal{S}\cup i}\backslash \mathcal{X}_{\mathcal{S}}$. Hence,we can simplify the previous equation as
	\begin{align}
		&\|\hat{p}_{\mathcal{S}\cup i} - p \|_2^2 -  \|\hat{p}_{\mathcal{S}\cup j} - p \|_2^2 \\ 
		&= \frac{\left(2-\sum_{ \bm{x}\in \mathcal{X}_{\mathcal{S}\cup i}} p(\bm{x}) -\sum_{ \bm{x}\in \mathcal{X}_{\mathcal{S}\cup j}} p(\bm{x})\right)\left(\sum_{ \bm{x}\in \mathcal{X}_{\mathcal{S}\cup j}\backslash\mathcal{X}_{\mathcal{S}}} p(\bm{x}) \right)}{|\mathcal{X}_{\mathcal{S}\cup j}|} + \left( \sum_{\bm{x}\in \mathcal{X}_{\mathcal{S}\cup j} \backslash\mathcal{X}_{\mathcal{S}}} p(\bm{x})^2  \right)
		\end{align}
	
	As we can see, if there exist $\bm{x}\in \mathcal{X}_{\mathcal{S}\cup j} \backslash\mathcal{X}_{\mathcal{S}}$ for all $\mathcal{S}\subset \mathcal{S}', i\in [n]\backslash \mathcal{S}', j\in \mathcal{S}'\backslash \mathcal{S}$, such that $p(\bm{x})>0$, then  $\|\hat{p}_{\mathcal{S}\cup i} - p \|_2^2 -  \|\hat{p}_{\mathcal{S}\cup j} - p \|_2^2 >0$ for all the $\mathcal{S},i,j$. That is, conceptually, there are enough positions $\bm{x}\in \mathcal{X}_{\mathcal{S}'}$ where $p(\bm{x})>0$. And this condition leads us to the following wanted inequality.
	\begin{align}\label{eq:thm3_eq5}
		\min_{\mathcal{S}:\mathcal{S}\subset \mathcal{S}'} \left(\min_{i\in [n]\backslash \mathcal{S}', j\in \mathcal{S}' \backslash \mathcal{S}} \|\hat{p}_{\mathcal{S}\cup i} - p \|_2 -\|\hat{p}_{\mathcal{S}\cup j} - p \|_2 \right) > 0.
	\end{align}
	Hence, under such conditions, \emph{i.e.}, inequality~\eqref{eq:thm3_eq5} and  inequality~\eqref{eq:thm3_eq2} hold, the greedy method is guaranteed to find exactly $\mathcal{S}'$.
	
\end{proof}

\section{Proof of Theorem~\ref{theorem:convergence_2}}
\begin{proof}
	Considering iteration $t$ and $t+1$ as in algorithm \ref{alg:iht}. We drop parentheses for clarity. Applying RSS property we have:
	\begin{align}
		F[p^{t+1}]-F[p^t] &\leq \left\langle \frac{\delta F}{\delta p^t},p^{t+1}-p^t \right\rangle +\frac{\beta}{2} \|p^{t+1}-p^t\|^2_2 \\
					  &= \frac{1}{\mu} \langle p^t-q^{t+1},p^{t+1}-p^t \rangle +\frac{\beta}{2} \|p^{t+1}-p^t\|^2_2 
	\end{align}
	Setting step size $\mu=1/\beta$, and then complete the square:
	\begin{align}
		F[p^{t+1}]-F[p^t] &\leq\frac{\beta}{2} (\|p^{t+1}-q^{t+1}\|^2_2-\|p^t-q^{t+1}\|^2_2) \\
					  &\leq  \frac{\beta}{2} \left[(1+\phi)\|p^{*}-q^{t+1}\|^2_2-\|p^t-q^{t+1}\|^2_2\right] 
	\end{align}
	where the inequality is due to approximate projection. Now by adding and subtracting $p^t$ in $\|p^{*}-q^{t+1}\|^2_2$ on the right hand side, we have:
	\begin{align}
	\frac{\beta}{2} &\left[(1+\phi)\left(\|p^{*}-p^t\|^2_2+\|p^t-q^{t+1}\|^2_2+2\langle p^\star-p^t,p^t-q^{t+1 }\rangle\right) -\|p^t-q^{t+1}\|^2_2\right] \\
	&= \frac{\beta}{2} \left[(1+\phi)\|p^{*}-p^t\|^2_2+\phi\|p^t-q^{t+1}\|^2_2+2(1+\phi)\langle p^\star-p^t,p^t-q^{t+1 }\rangle \right] \\
	&= \frac{\beta}{2}(1+\phi)\|p^{*}-p^t\|^2_2+\frac{\phi}{2\beta}\|\frac{\delta F}{\delta p^t}\|^2_2+(1+\phi)\langle p^\star-p^t,\frac{\delta F}{\delta p^t}\rangle
	\end{align}
	Applying the Lipschitz condition \ref{assum:lips}, we have:
	\begin{align}\label{eq:thm4_eq1}
		 F[p^{t+1}]-F[p^t] \leq \frac{\beta}{2}(1+\phi)\|p^{*}-p^t\|^2_2+\frac{\phi L^2}{2\beta}+(1+\phi)\langle p^\star-p^t,\frac{\delta F}{\delta p^t}\rangle
	\end{align}
	To bound the last inner product in \eqref{eq:thm4_eq1}, we need RSC property:
	\[ \langle p^\star-p^t,\frac{\delta F}{\delta p^t}\rangle \leq F[p^\star]-F[p^t]-\frac{\alpha}{2}\|p^t-p^\star\|^2_2 \]
	Apply it to relax the inner product term, we have \eqref{eq:thm4_eq1}:
	\begin{align}\label{eq:thm4_eq2}
		\leq \frac{\beta-\alpha}{2}(1+\phi)\|p^\star-p^t\|_2^2+\frac{\phi L^2}{2\beta}+ (1+\phi) [F[p^\star]-F[p^t] 
	\end{align}
	Next we find the relation between $F[p^\star]-F[p^t]$ and $\|p^\star-p^t\|_2^2$ by RSC:
	\begin{align}
		F[p^t]-F[p^\star]&\geq \langle \frac{\delta F}{\delta p^\star},p^t-p^\star\rangle +\frac{\alpha}{2} \|p^t-p^\star\|^2_2 \geq \frac{\alpha}{2} \|p^t-p^\star\|^2_2-\|\frac{\delta F}{\delta p^\star}\|_2\cdot \|p^t-p^\star\|_2 \\
					&\geq \frac{\alpha}{2} \|p^t-p^\star\|^2_2-L\|p^t-p^\star\|_2 = \frac{\alpha}{2}\left[ \|p^t-p^\star\|_2 -\frac{L}{\alpha} \right]^2 -\frac{L^2}{2\alpha}
	\end{align}
	where the second inequality is by Cauchy–Schwarz inequality. When $\|p^t-p^\star\|_2 \leq \frac{L}{2\alpha}$, we have
	\begin{align}\label{eq:thm4_eq3}
		F[p^t]-F[p^\star] \geq \frac{\alpha}{2} \|p^t-p^\star\|_2^2 -\frac{L^2}{2\alpha}
	\end{align}
	
	Apply \eqref{eq:thm4_eq3} to \eqref{eq:thm4_eq2} to convert $\|p^t-p^\star\|_2$ to $F[p^t]-F[p^\star]$, we have \eqref{eq:thm4_eq2}
	\begin{align}\label{eq:thm4_eq4}
		\leq (1+\phi)(2-\frac{\beta}{\alpha})[F[p^\star]-F[p^t]] + \left( \frac{\phi}{2\beta} +(1+\phi)\frac{\beta-\alpha}{2\alpha^2} \right)L^2
	\end{align}
	
	We denote the last term in \eqref{eq:thm4_eq4} as $c_1$, and rearrange the equation, we have
	\[ F[p^{t+1}]-F[p^\star] \leq \left( 1-(1+\phi)(2-\beta/\alpha) \right) \left[F[p^t]-F[p^\star]\right] +c_1 \]	
	\begin{align}\label{eq:thm4_eq5}
		 F[p^{t+1}]-F[p^\star]-c \leq \left( 1-(1+\phi)(2-\beta/\alpha) \right) \left[F[p^t]-F[p^\star] -c\right]
	\end{align}
	where 
	\[ c=\frac{c_1}{(1+\phi)(2-\beta/\alpha)}=\frac{\left( \phi/(2\beta) +(1+\phi)(\beta-\alpha)/(2\alpha^2) \right)L^2}{(1+\phi)(2-\beta/\alpha)} \]
	From \eqref{eq:thm4_eq5}, we can see that if $0<(1+\phi)(2-\beta/\alpha)<1$, or $2-1/(1+\phi)<\beta/\alpha<2$, IHT is guaranteed to converge to $F[p^\star]+c$ linearly. The smaller $\phi$ is and the closer is $\beta$ to $\alpha$, the smaller $c$ is.
	
\end{proof}

\section{Aditional Experimental details}
\label{app:experiments}
{\bf Model Compression / Compressed sensing}

We apply our IHT to the task of expectation-preserving distribution compression, useful for efficiently storing large probability tables. Given a distribution $p(\cdot)$, our goal is to construct a sparse approximation $q(\cdot)$, such that $q(\cdot)$ approximately preserves expectations with respect to $p(\cdot)$. Interestingly, this model compression problem is equivalent to compressed sensing, but with the distributional constraints. We consider the vector sparsity in this experiment, \emph{i.e.}, the distribution $\bm{q}$ is represented as a long vector in space $[0,1]^{n}$, as described in Appendix~\ref{app:aligned}. The problem setting we use in this experiment is to minimize $||A\bm{q}-A\bm{p}||^2_2$ subject to the vector distribution $k$-sparsity constraint of $\bm{q}$, \emph{i.e.}, $||\bm{q}||_0\leq k $ and $\sum_{i\in [n]} q_i=1$. We first train the algorithms to minimize $||A\bm{q}-A\bm{p}||^2_2$ and then test their error on $||B\bm{q}-B\bm{p}||^2_2$, where $A,B$ are randomly drawn from normal distribution $\mathcal{N}(0,1)$, and $\bm{p}\in [0,1]^n$ is a distribution generated from data. 

We use real-world data: \textit{Total Charges in 2012 Base Data 1, Hospital Discharge Data Public Use Data File}\footnote{\small \url{https://www.dshs.state.tx.us/THCIC/Hospitals/Download.shtm}}, which contains 740817 records. We set 10000 bins with bin size of 1000, to converge the data into a histogram, and hence the distribution $\bm{p}\in [0,1]^{10000}$. Note that $\bm{p}$ is already sparse. We set the dimension of $A,B$ to $500\times 10000$.

We compare IHT with two baselines, \emph{i.e.}, Lasso and Random. Note that in vector-sparsity setting, the sparse $l_2$ distribution projection can be done optimally, since it becomes essentially a projection to a simplex, as we discussed in the main paper. The Lasso baseline is to minimize $||A\bm{q}-A\bm{p}||^2_2 + \gamma ||\bm{q}||_1$ and then project its solution to the $k$-sparse distribution domain. The Random baseline is to randomly generate $T$ $k$-sparse distribution, and simply choose the best, where $T$ is the iteration number of IHT. Note that though the Greedy algorithm can work on this setting theoretically, it is too time costly to compare with the previously mentioned three algorithms.

As both training matrix $A$ and testing matrix $B$ are randomly generated, we use 10 different $A$ for which we train the algorithms for 10 times, and after each training process we generate 20 test matrices $B$ to test the error of each algorithms.

\begin{center}
	\begin{minipage}{0.5\linewidth}
		\includegraphics[width=\linewidth]{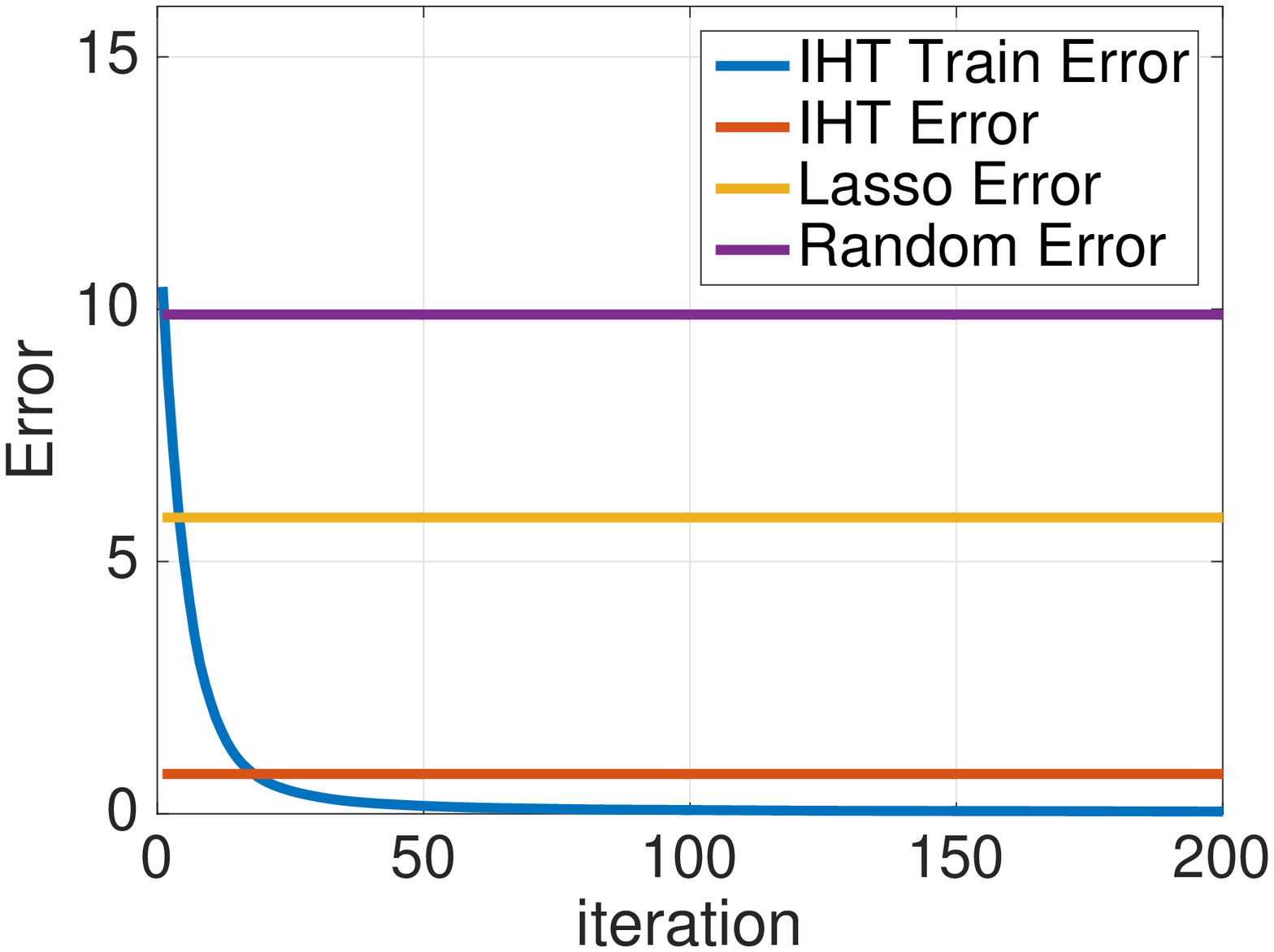}
		\captionof*{figure}{(a) IHT converges fast ($k=200$)}
	\end{minipage}%
	\begin{minipage}{0.5\linewidth}
		\includegraphics[width=\linewidth]{figures/diff_k.eps}
		\captionof*{figure}{(b) Error Comparison with different sparsity $k$}
	\end{minipage}
	\captionof{figure}{Real-data Experiments}\label{fig:exp_2}
\end{center}

Figure~\ref{fig:exp_2} (a) gives the convergence result when we set sparsity level $k=200$. The IHT Train Error shows the training error of IHT at each iterations. IHT Error, Lasso Error and Random Error are testing errors of the three algorithms after training. We can see the promising results of IHT which outperforms other baselines. In Figure~\ref{fig:exp_2} (b), we test the three algorithms on different sparsity level $k=100\cdots 500$. IHT, Lasso and Random are testing errors of the three algorithms after training. Our results verify that IHT does the best regardless of sparsity level $k$.

{\bf Digits data: Dataset compression}

We study representative prototype selection for the Digits data~\cite{Hull1994}, which contains  $7291$ training and $2007$ test examples of handwritten grayscale images. Prototypes are representative examples chosen from the data, in order to achieve dataset compression, while preserving certain desirable properties. In this experiment, our goal is to achieve compression to speed up nearest neighbor classification on unseen data as the quality measure of the selected prototypes. To this end, we embed the data using the RBF kernel $\exp(\gamma | \mathbf{x}_i - \mathbf{x}_i|^2)$, where the parameter $\gamma $ is set using cross validation, and use the Maximum Mean Discrepancy (MMD) between the discrete data distribution in the embedded space, with the sparse data distribution representing the selected samples as our cost function for IHT. For two densities $p$ and $p$, we can write $\text{MMD}^2 = E_{x,y \sim p } K(x,y) -2 E_{x\sim p, y \sim q} K(x,y) + E_{x \sim q, y \sim q} K(x,y)$, where $K(\cdot, \cdot)$ is the RBF kernel function. After the prototypes are selected, we evaluate the $0/1$ classification error with $1$ Nearest Neighbor on the test data using only the selected prototypes. 

We compare two forward selection greedy variants (Local Greedy and Global Greedy) proposed by~\cite{Kim2016Criticism} and the means algorithm (labeled as PS) proposed by~\cite{bien2011}, both state of the art. The results are presented in Figure~\ref{fig:real}(b). We see that IHT performs better than the baselines across different number of selected prototypes, especially when the number of prototypes is smaller.
\end{document}